\def\eqref#1{equation~\ref{#1}}
\def\1{\bm{1}}
\DeclareMathAlphabet{\mathsfit}{\encodingdefault}{\sfdefault}{m}{sl}
\SetMathAlphabet{\mathsfit}{bold}{\encodingdefault}{\sfdefault}{bx}{n}
\newtheorem{proposition}{Proposition}
\newtheorem{repproposition}{Proposition}
\newtheorem{claim}{Claim}
\icmltitlerunning{Improving the Tightness of Convex Relaxation Bounds for Training Certifiably Robust Classifiers}
\begin{document}

\twocolumn[
\icmltitle{Improving the Tightness of Convex Relaxation Bounds \\
for Training Certifiably Robust Classifiers}




\begin{icmlauthorlist}
\icmlauthor{Chen Zhu}{umd}
\icmlauthor{Renkun Ni}{umd}
\icmlauthor{Ping-yeh Chiang}{umd}
\icmlauthor{Hengduo Li}{umd}
\icmlauthor{Furong Huang}{umd}
\icmlauthor{Tom Goldstein}{umd}
\end{icmlauthorlist}

\icmlaffiliation{umd}{Department of Computer Science, University of Maryland, College Park}

\icmlcorrespondingauthor{Chen Zhu}{chenzhu@cs.umd.edu}
\icmlcorrespondingauthor{Tom Goldstein}{tomg@cs.umd.edu}

\icmlkeywords{Machine Learning, ICML}

\vskip 0.3in
]



\printAffiliationsAndNotice{}  

\begin{abstract}
Convex relaxations are effective for training and certifying neural networks against norm-bounded adversarial attacks, but they leave a large gap between certifiable and empirical robustness. In principle, convex relaxation can provide tight bounds if the solution to the relaxed problem is feasible for the original non-convex problem. We propose two regularizers that can be used to train neural networks that yield tighter convex relaxation bounds for robustness. In all of our experiments, the proposed regularizers result in higher certified accuracy than non-regularized baselines. 
\end{abstract}

\newcommand{\cI}{\mathcal{I}}
\newcommand{\cS}{\mathcal{S}}
\newcommand{\conv}{\overline{conv}}
\newcommand{\zij}{z_{ij}}
\newcommand{\ziplj}{z_{i+1,j}}
\newcommand{\xij}{x_{ij}}
\newcommand{\lxij}{\underline{x}_{ij}}
\newcommand{\uxij}{\overline{x}_{ij}}
\newcommand{\lxiprimej}{\underline{x}_{i'j}}
\newcommand{\uxiprimej}{\overline{x}_{i'j}}
\newcommand{\lxnj}{\underline{x}_{nj}}
\newcommand{\uxnj}{\overline{x}_{nj}}
\newcommand{\uaij}{\overline{a}_{ij}}
\newcommand{\laij}{\underline{a}_{ij}}
\newcommand{\lbij}{\underline{b}_{ij}}
\newcommand{\ubij}{\overline{b}_{ij}}
\newcommand{\cW}{\mathcal{W}}
\newcommand{\popt}{\mathcal{O}}
\newcommand{\ropt}{\mathcal{C}}
\newcommand{\slope}{\frac{\uxij}{\uxij - \lxij}}
\newcommand{\udelta}{-\frac{\uxij\lxij}{\uxij - \lxij}}
\newcommand{\uxtwoj}{\overline{x}_{1,j}}
\newcommand{\lxtwoj}{\underline{x}_{1,j}}
\newcommand{\slopetwo}{\frac{\uxtwoj}{\uxtwoj - \lxtwoj}}
\newcommand{\udeltatwo}{-\frac{\uxtwoj\lxtwoj}{\uxtwoj - \lxtwoj}}
\newcommand{\lxi}{\underline{x}_{i}}
\newcommand{\uxi}{\overline{x}_{i}}
\newcommand{\lxis}{\underline{x}_{i-1}}
\newcommand{\uxis}{\overline{x}_{i-1}}
\newcommand{\regd}{ d(x,\delta_0^*,W,b)}
\newcommand{\regr}{r(x,\delta_0^*, W, b)}
\newcommand{\real}{\mathbb{R}}
\section{Introduction}
Neural networks have achieved excellent performances on many computer vision tasks, but they are often vulnerable to small, adversarially chosen perturbations that are barely perceptible to humans while having a catastrophic impact on model performance~\citep{szegedy2013intriguing,goodfellow2014explaining}. Making classifiers robust to these adversarial perturbations is of great interest, especially when neural networks are applied to safety-critical applications. 
Several heuristic methods exist for obtaining robust classifiers, however powerful adversarial examples can be found against most of these defenses~\citep{carlini2017adversarial,uesato2018adversarial}. 

Recent studies focus on verifying or enforcing the certified accuracy of deep classifiers, especially for networks with ReLU activations. 
They provide guarantees of a network's robustness to any perturbation $\delta$ with norm bounded by $\lVert \delta \rVert_p\le \epsilon$~\citep{wong2017provable,wong2018scaling,raghunathan2018certified,dvijotham2018dual,zhang2018efficient, salman2019convex}. 
Formal verification methods can find the exact minimum adversarial distortions needed to fool a classifier~\citep{ehlers2017formal,katz2017reluplex,tjeng2017evaluating}, but require solving an NP-hard problem. 
To make verification efficient and scalable, convex relaxations are adopted, resulting in a lower bound on the norm of adversarial perturbations~\citep{zhang2018efficient, weng2018towards},
or an upper bound on the robust error~\citep{dvijotham2018dual,gehr2018ai2,singh2018fast}.
Linear programming (LP) relaxations~\citep{wong2018scaling} are efficient enough to estimate the lower bound of the margin in each iteration for training certifiably robust networks. 
However, due to the relaxation of the underlying problem, a wide gap remains between the optimal values from the original and relaxed problems~\citep{salman2019convex}.  

In this paper, we focus on improving the certified robustness of neural networks trained with convex relaxation bounds. 
To achieve this, we first give a more interpretable explanation for the bounds achieved in~\citep{weng2018towards,wong2018scaling}. 
Namely, the constraints of the relaxed problem are defined by a simple linear network with adversaries injecting bounded perturbations to both the input of the network and the pre-activations of intermediate layers.
The optimal solution of the relaxed problem can be written as a forward pass of the clean image through the linear network, plus the cumulative adversarial effects of all the perturbations added to the linear transforms, which makes it easier to identify the optimality conditions and serves as a bridge between the relaxed problem and the original non-convex problem.
We further identify conditions for the bound to be tight, and 
we propose two indicators for the gap between the original non-convex problem and the relaxed problem. Adding the proposed indicators into the loss function results in classifiers with better certified accuracy.

\section{Background and Related Work}
Adversarial defenses roughly fall into two categories: heuristic defenses and verifiable defenses. The heuristic defenses either try to identify adversarial examples and remove adversarial perturbations from images, or make the network invariant to small perturbations through training \citep{papernot2018deep,shan2019gotta,samangouei2018defense,hwang2019puvae}.
In addition, adversarial training uses adversarial examples as opposed to clean examples during training, so that the network can learn how to classify adversarial examples directly~\citep{madry2017towards, shafahi2019adversarial, zhang2019you}. 

In response, a line of works have proposed to verify the robustness of neural nets. 
Exact methods obtain the perturbation $\delta$ with minimum $\lVert \delta \rVert_p$ such that $f(x) \neq f(x+\delta)$, where $f$ is a classifier and $x$ is the data point. 
Nevertheless, the problem itself is NP-hard and the methods can hardly scale~\citep{cheng2017maximum,lomuscio2017approach,dutta2018output,fischetti2017deep,tjeng2017evaluating,scheibler2015towards,katz2017reluplex,carlini2017provably,ehlers2017formal}.

A body of work focuses on relaxing the non-linearities in the original problem into linear inequality constraints~\citep{singh2018fast,gehr2018ai2,zhang2018efficient,mirman2018differentiable}, sometimes using the dual of the relaxed problem~\citep{wong2017provable,wong2018scaling,dvijotham2018dual}.
Recently,~\cite{salman2019convex} unified the primal and dual views into a common convex relaxation framework, and suggested there is an inherent gap between the actual and the lower bound of robustness given by verifiers based on LP relaxations, which they called a \textit{convex relaxation barrier}.    

Some defense approaches integrate the verification methods into the training of a network to minimize robust loss directly. 
\cite{hein2017formal} uses a local lipschitz regularization to improve certified robustness. 
In addition, a bound based on semi-definite programming (SDP) relaxation was developed and minimized as the objective~\citep{raghunathan2018certified}.
~\cite{wong2017provable} presents an upper bound on the robust loss caused by norm-bounded perturbation via LP relaxation, and minimizes this upper bound during training. 
\cite{wong2018scaling} further extend this method to much more general network structures with skip connections and general non-linearities, and provide a memory-friendly training strategy using random projections. 
Since LP relaxation is adopted, the aforementioned convex relaxation barrier exists for their methods.

While another line of work (IBP) have shown that an intuitively looser interval bound can be used to train much more robust networks than convex relaxation for large $\ell_\infty$ perturbations~\citep{gowal2018effectiveness, zhang2019towards}, it is still important to study convex relaxation bounds since it can provide better certificates against a broader class of adversaries that IBP struggles to certify in some cases, such as $\ell_2$ adversaries for convolutional networks. We discuss these motivations in more detail in Appendix~\ref{sec:ibp_l2}.

We seek to enforce the tightness of the convex relaxation certificate during training. 
We reduce the optimality gap between the original and the relaxed problem by using various tightness indicators as regularizers during training.
Compared with previous approaches, we have the following contributions: First, based upon the same relaxation in~\citep{weng2018towards}, we illustrate a more intuitive view for the bounds on intermediate ReLU activations achieved by~\citep{wong2018scaling}, which can be viewed as a linear network facing adversaries adding that make bounded perturbations to both the input and the intermediate layers. 
Second, starting from this view, we identify conditions where the bound from the relaxed problem is tight for the original non-convex problem.
Third, based on the conditions, we propose regularizers that encourage the bound to be tight for the obtained network, which improves the certificate on both MNIST and CIFAR-10.

\section{Problem Formulation}\label{sec:method}
In general, to train an adversarially robust network, we solve a constrained minimax problem where the adversary tries to maximize the loss given the norm constraint, and the parameters of the network are trained to minimize this maximal loss. 
Due to nonconvexity and the complexity of neural networks, it is expensive to solve the inner max problem exactly. 
To obtain certified robustness, like many related works~\citep{wong2018scaling, gowal2018effectiveness}, we minimize an upper bound of the inner max problem, which is a cross entropy loss on the negation of the lower bounds of margins over each other class, as shown in Eq.~\ref{eq:obj}.
Without loss of generality, in this section we analyze the original and relaxed problems for minimizing the margin between the ground truth class $y$ and some other class $t$ under norm-bounded adversaries, which can be adapted directly to compute the loss in Eq.~\ref{eq:obj}.

The original nonconvex constrained optimization problem for finding the norm-bounded adversary that minimizes the margin can be formulated as
\begin{align}\label{eq:nonconvex}\tag{$\popt$}
\underset{z_1\in\mathcal{B}_{p,\epsilon}(x)}{\text{minimize }} & c_t^\top x_{L}, \\
\text{subject to } & z_{i+1}=\sigma(x_{i}), x_{i}=f_i(z_i),  \nonumber\\
\text{for}~ &i=1,...,L,\nonumber
\end{align}
 where $c_t=e_y-e_t$, $e_y$ and $e_t$ are one-hot vectors corresponding to the label $y$ and some other class $t$, 
$\sigma(\cdot)$ is the ReLU activation, and
$f_i$ is one functional block of the neural network.
This can be a linear layer ($f_i(z_i)=W_iz_i+b_i$), or even a residual block. 
We use $h_i(x)=f_i(\sigma(f_{i-1}(\cdots f_1(x))))$ to denote the ReLU network up to the $i$-th layer, and $p_\popt^*$ to denote the optimal solution to~\ref{eq:nonconvex}.

\begin{figure*}[!htbp]
\centering
  \includegraphics[width=0.75\linewidth]{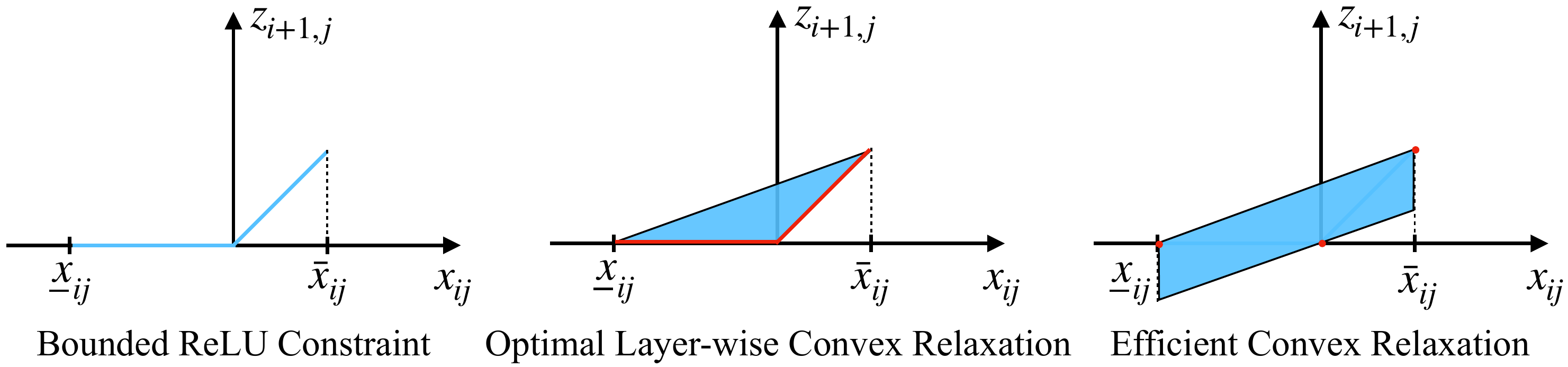}
  \caption{The feasible sets (blue regions/lines) given by the bounded ReLU constraints (Eq.~\ref{eq:nonconvex}), convex hull ($\conv_{ij}$) and the relaxation (Fast-Lin) discussed in this paper (specific choice for Eq.~\ref{eq:convex_relaxation}) for $j\in \cI_i$. The red lines and dots are the intersections between the boundaries of the convex feasible sets and the ReLU constraints.
  }
  \label{fig:relaxation} 
\end{figure*}


\subsection{Efficient Convex Relaxations}
\paragraph{Grouping of ReLU Activations}
The nonconvexity of~\ref{eq:nonconvex} stems from the nonconvex feasible set given by the ReLU activations.
Since the network is a continuous function, the pre-activations $x_i$ have lower and upper bounds $\underline{x}_i$ and $\bar{x}_i$ when the input $z_1\in\mathcal{B}_{p,\epsilon}(x)$. 
If a certain pre-activation $x_{ij}$ has $\underline{x}_{ij}<0<\bar{x}_{ij}$, its corresponding ReLU constraint $z_{i+1,j}=\sigma(x_{ij})$ gives rise to a non-convex feasible set as shown in the left of Figure~\ref{fig:relaxation}, making Eq.~\ref{eq:nonconvex} a non-convex optimization problem.
On the other hand, if $\bar{x}_{ij}\le 0$ or $\underline{x}_{ij}\ge 0$, the constraints degenerate into linear constraints $z_{i+1,j}=0$ and $z_{i+1,j}=x_{ij}$ respectively, which do not affect convexity. 
Based on $\underline{x}_i$ and $\bar{x}_i$, we divide the ReLU activations into three disjoint subsets
\begin{align}
    &\cI_i^{-} = \{j|\bar{x}_{ij} \le 0\},~~~\cI_i^{+} = \{j|\underline{x}_{ij} \ge 0\},\nonumber\\
    &\cI_i = \{j|\underline{x}_{ij} < 0 < \bar{x}_{ij}\}.
\end{align}
If $j\in \cI_i$, we call the corresponding ReLU activation an {\em unstable neuron}.

Convex relaxation expands the non-convex feasible sets into convex ones and solves a convex optimization problem $\mathcal{C}$. 
The feasible set of~\ref{eq:nonconvex} is a subset of the feasible set of $\mathcal{C}$, so the optimal value of $\mathcal{C}$ lower bounds the optimal value of Eq.~\ref{eq:nonconvex}. 
Moreover, we want problem $\mathcal{C}$ to be solved efficiently, better with a closed form solution, so that it can be integrated into the training process. 

\paragraph{Computational Challenge for the ``optimal'' Relaxation} As pointed out by~\citep{salman2019convex}, the optimal layer-wise convex relaxation, i.e., the optimal convex relaxation for the nonlinear constraint $z_{i+1}=\sigma(x_i)$ of a single layer, can be obtained independently for each neuron. 
For each $j\in \cI_i$ in a ReLU network, the optimal layer-wise convex relaxation is the closed convex hull $\conv_{ij}$ of $\cS_{ij}=\{(x_{ij}, z_{i+1,j})|j\in \cI_i, z_{i+1,j}=\max(0,\xij), \lxij \le \xij \le \uxij\}$, which is just $\conv_{ij}=\{ (x_{ij}, z_{i+1,j}) | \max(0, \xij) \le z_{i+1,j} \le \frac{\uxij}{\uxij - \lxij}(\xij-\lxij)\}$, corresponding to the triangle region in the middle of Figure~\ref{fig:relaxation}.
Despite being relatively tight, there is no closed-form solution to this relaxed problem. 
LP solvers are typically adopted to solve a linear programming problem for each neuron.
Therefore, such a relaxation is hardly scalable to verify larger networks without any additional trick (like~\cite{xiao2018training}). 
\cite{weng2018towards} find it to be 34 to 1523 times slower than Fast-Lin, and it has difficulty verifying MLPs with more than 3 layers on MNIST.
In~\citep{salman2019convex}, it takes 10,000 CPU cores to parallelize the LP solvers for bounding the activations of every neuron in a two-hidden-layer MLP with 100 neurons per layer.
Since solving LP problems for all neurons are usually impractical, it is even more difficult to optimize the network to maximize the lower bounds of margin found by solving this relaxation problem, as differentiating through the LP optimization process is even more expensive.

\paragraph{Computationally Efficient Relaxations}
In the layer-wise convex relaxation, instead of using a boundary nonlinear in $x_{ij}$,~\citep{zhang2018efficient} has shown that for any nonlinearity, when both the lower and upper boundaries are linear in $x_{ij}$, there exist closed-form solutions to the relaxed problem, which avoids using LP solvers and improves efficiency. 
Specifically, the following relaxation of~\ref{eq:nonconvex} has closed-form solutions:
\begin{align}\label{eq:convex_relaxation}\tag{$\mathcal{C}$}
    \text{minimize }~~ &  c_t^\top x_{L}, \nonumber\\
    \text{subject to }~~ & \underline{a}_i\cdot x_i + \underline{b}_i \le z_{i+1}\le \bar{a}_i \cdot x_i + \bar{b}_i,\nonumber\\
    &\underline{x}_i \le x_i \le \bar{x}_i, \nonumber \\
    &x_{i}=W_i z_i + b_i, \nonumber \\
    &\text{for } i=1,...,L, z_1\in\mathcal{B}_{p,\epsilon}(x),\nonumber
\end{align}
where $\cdot$ denotes element-wise product, and for simplicity, we have only considered networks with no skip connections, and represent both Full Connected and Convolutional Layers as a linear transform $f_i(z_i)=W_i z_i+b_i$.

Before we can solve~\ref{eq:convex_relaxation} to get the lower bound of margin, we need to know the range $[\underline{x}_i, \bar{x}_i]$ for the pre-activations $x_i$. 
As in~\citep{wong2017provable, weng2018towards,zhang2018efficient}, we can solve the same optimization problem for each neuron $x_{ij}$ starting from layer 1 to $L$, by replacing $c_t$ with $e_j$ or $-e_j$ for $\underline{x}_{ij}$ or $\bar{x}_{ij}$ respectively.\footnote{For $\bar{x}_{ij}$, take an extra negation on the solution.}

The most efficient approach in this category is Fast-Lin~\citep{weng2018towards}, which sets $\underline{a}_{ij}=\bar{a}_{ij}$, as shown in the right of Figure~\ref{fig:relaxation}. 
A tighter choice is CROWN~\citep{zhang2018efficient}, which chooses different $\underline{a}_{ij}$ and $\bar{a}_{ij}$ such that the convex feasible set is minimized. 
However, CROWN has much higher complexity than Fast-Lin due to its varying slopes.
We give detailed analysis of the closed-form solutions of both bounds and their complexities in Appendix~\ref{sec:cr_sols}.
Recently, CROWN-IBP~\citep{zhang2019towards} has been proposed to provide a better initialization to IBP, which uses IBP to estimate range $[\underline{x}_i, \bar{x}_i]$ for CROWN.
In this case, both CROWN and Fast-Lin have the same complexity and CROWN is a better choice. 



\begin{figure}[!htbp]
\centering
  \includegraphics[width=.6\linewidth]{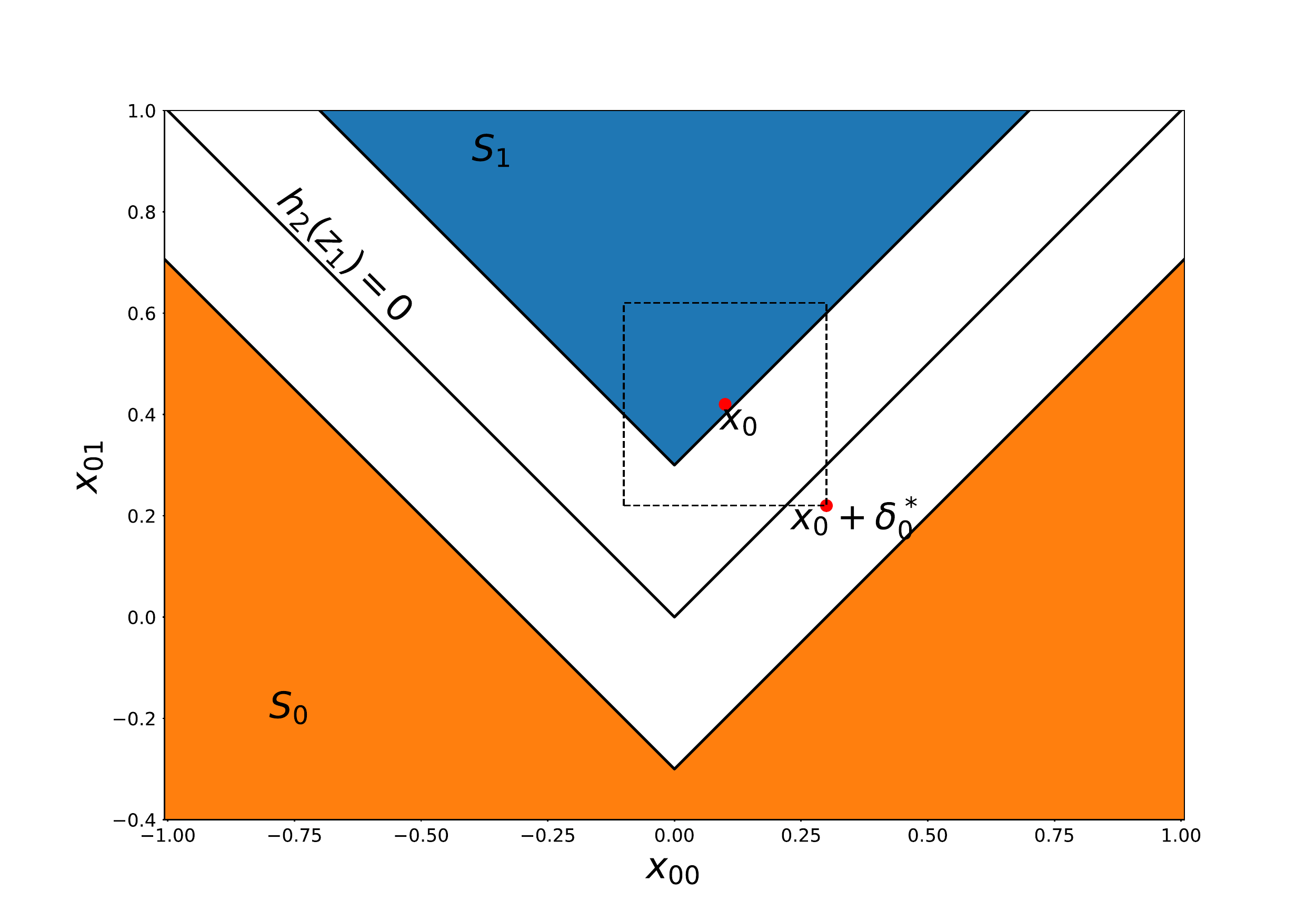}
  \caption{Illustration of the data distribution and the decision boundary of the network $h_2(z_1)$, where Fast-Lin gives the exact lower bound of the margin for every sample in $S_1$. We assume $x_0$ is uniformly distributed in $S_0\cup S_1$, where $S_0=\{x_0| x_{02}\le |x_{01}|-b, \lVert x_0 \rVert_{\infty}\le 1\}$, $S_1=\{x_0| x_{02}\ge |x_{01}|+b,  \lVert x_0 \rVert_{\infty}\le 1\}$, and $0<b<1$.
The ground-truth label for $x_0\in S_0$ and $x_0\in S_1$ are 0, 1 respectively. In this case, $b=0.3, \epsilon=0.2, x_0=[0.1, 0.42]^T$. 
  }
  \label{fig:toy1} 
\end{figure}

\section{Tighter Bounds via Regularization}\label{sec:indicator}
Despite being relatively efficient to compute, Fast-Lin and CROWN are not even the tightest layer-wise convex relaxation. 
Using tighter bounds to train the networks could potentially lead to higher certified robustness by preventing such bounds from over-regularizing the networks. 

Nevertheless, there exist certain parameters and inputs such that the seemingly looser Fast-Lin is tight for~\ref{eq:nonconvex}, i.e., the optimal value of Fast-Lin is the same as~\ref{eq:nonconvex}. 
The immediate trivial case on can think of is where no unstable neuron exists for the samples inside the allowed perturbation interval. 

In fact, even when unstable neurons exist, the optimal solution to the relaxed problem can still be a feasible solution to the original non-convex problem for a significant portion of input samples . 
We give an illustrative example where Fast-Lin is tight for a significant portion of the samples even when unstable neurons exist, as shown in Figure~\ref{fig:toy1}.
In this figure, Fast-Lin is tight at every sample of $S_1$ for the network $h_1(z_2)$.
Please refer to Appendix~\ref{sec:toy} for more details of this example. 

It is therefore interesting to check the conditions for Fast-Lin or CROWN to be tight for~\ref{eq:nonconvex}, and enforcing such conditions during training so that the network can be better verified by efficient verifiers like Fast-Lin, CROWN, and even IBP.

\subsection{Conditions for Tightness}
Here we look into conditions that make the optimal value $p_{\ropt}^*$ of the convex problem $\mathcal{C} $ to be equal to $p_{\popt}^*$. 
Let $\{z_i, x_i\}_{i=1}^L$ be some feasible solution of $\mathcal{C}$, from which the objective value of $\mathcal{C}$ can be determined as $p_{\mathcal{C}}=c_t^\top x_L$. 
Let $\{z'_i, x'_i\}_{i=1}^{L}$ be some feasible solution of $\popt$ computed by passing $z'_1$ through the ReLU sub-networks $h_i(z'_1)$ defined in~\ref{eq:nonconvex}, and denote the resulting feasible objective value as $p'_{\popt}=c_t^\top x'_L$.

Generally, for a given network with the set of weights $\{W_i,b_i\}_{i=1}^L$, as long as the optimal solution $\{ z_i^*, x_i^* \}_{i=1}^L$ of $\mathcal{C}$ is equal to a feasible solution $\{z'_i, x'_i\}_{i=1}^L$ of~\ref{eq:nonconvex}, we will have $p^*_{\popt}=p^*_{\ropt}$,
since any feasible $p'_{\mathcal{O}}$ of $\mathcal{O}$ satisfies $p'_{\mathcal{O}}\ge p_{\mathcal{O}}^*$, and by the nature of relaxation $p_{\mathcal{C}}^* \le p_{\mathcal{O}}^*$.

Therefore, for a given network and input $x$, to check the tightness of the convex relaxation, we can check whether its optimal solution $\{ z_i^*, x_i^* \}_{i=1}^L$ is feasible for $\mathcal{O}$. 
This can be achieved by passing $z_1^*$ through the ReLU network, and either directly check the resultant objective value $p_{\mathcal{O}}'$, or compare $\{ z_i^*, x_i^* \}_{i=1}^L$ with the resultant feasible solution $\{z'_i, x'_i\}_{i=1}^L$. 
Further, we can encourage such conditions to happen during the training process to improve the tightness of the bound.
Based on such mechanisms, we propose two regularizers to enforce the tightness. 
Notice such regularizers are different from the RS Loss~\citep{xiao2018training} introduced to reduce the number of unstable neurons, since we have shown with Appendix~\ref{sec:toy} that $\mathcal{C}$ can be tight even when unstable neurons exist.

\subsection{A Intuitive Indicator of Tightness: Difference in Output Bounds}\label{sec:direct_indicator}
The observation above motivates us to consider the non-negative value
\begin{equation}\label{eq:distillation}
    d(x,\delta_0^*,W,b) = p'_{\popt}(x,\delta_0^*)-p_{\ropt}^*
\end{equation}
as an indicator of the difference between $\{z^*_i, x^*_i\}_{i=1}^{L}$ and $\{z'_i, x'_i\}_{i=1}^{L}$, 
where $p'_{\popt}(x,\delta_0^*)=c_t^\top h_L(x+\delta_0^*)$ is the margin over class $t$ computed by passing the optimal perturbation $\delta_0^*$ for $\mathcal{C}$ through the original network. 
$\delta_0^*$ can be computed efficiently from the optimality condition of Fast-Lin or CROWN, as demonstrated in Eq.~\ref{eq:dual_norm}.
For example, when $p=\infty$, the optimal input perturbation $\delta_0^*$ of $\ropt$ is $\delta_0^*=-\epsilon\text{sign}(c_t^\top\cW_{L:1})$, which corresponds to sending $z'_1=z^*_1=x-\epsilon \text{sign}(c_t^\top \cW_{L:1})$ through the ReLU network;
when $p=2$, $\delta_0^*=-\epsilon\frac{c_t^\top\cW_{L:1}}{\lVert c_t^\top\cW_{L:1} \rVert_2}$, which corresponds to sending $z'_1=z^*_1=x-\epsilon \frac{c_t^\top\cW_{L:1}}{\lVert c_t^\top\cW_{L:1} \rVert_2}$. 

The larger $d(x,\delta_0^*,W,b)$ is, the more relaxed $\ropt$ is, and the higher $p^*_{\popt}-p^*_{\ropt}$ could be. 
Therefore, we can regularize the network to minimize $ d(x,\delta_0^*,W,b)$ during training and maximize the lower-bound of the margin $p^*_{\ropt}$, so that we can obtain a network where $p^*_{\ropt}$ is a better estimate of $p^*_{\popt}$ and the robustness is better represented by $p^*_{\ropt}$.
Such an indicator avoids comparing the intermediate variables, which gives more flexibility for adjustment. 
It bears some similarities to knowledge distillation~\citep{hinton2015distilling}, in that it encourages learning a network whose relaxed lower bound gives similar outputs of the corresponding ReLU network.
It is worth noting that minimizing $d(x,\delta_0^*,W,b)$ does not necessarily lead to decreasing $ p'_{\popt}(x,\delta_0^*)$ or increasing $p_{\ropt}^*$. 
In fact, both $ p'_{\popt}(x,\delta_0^*)$ and $p_{\ropt}^*$ can be increased or decreased at the same time with their difference decreasing.

\begin{algorithm*}[htbp!]
 \KwData{Clean images$\{x^{(j)}\}_{j=1}^m$, ReLU network with parameters $W,b$, and maximum perturbation $\epsilon$.}
 \KwResult{Margin Lower Bound $p_{\ropt}^*$, Regularizer $d(x,\delta_0^*,W,b)$ and $r(x,\delta_0^*, W, b)$.}
 ~~Initialize $\cW_{1:1}\leftarrow W_1$, $g_1(x)=W_1 x + b_1$, $\underline{x}_1=g_1(x)-\epsilon \lVert \cW_{1:1} \rVert_{1,row}$, 
 $\bar{x}_1=g_1(x)+\epsilon \lVert \cW_{1:1} \rVert_{1,row}$\;

 \For{i=2,\ldots,L-1}{
  ~~Compute $D_{i-1}$ with Eq.~\ref{eq:Danddelta}, $\cW_{i:1}\leftarrow W_i D_{i-1} \cW_{i-1}, g_i (x)=W_i D_{i-1} g_{i-1}(x)+b_i$\\ 
  $\lxi\leftarrow g_i (x) - \epsilon \lVert \cW_{i:1} \rVert_{1,row}-\sum_{i'=1}^{i-1}\sum_{j\in I_{i'}}\frac{\lxiprimej\uxiprimej}{\uxiprimej - \lxiprimej}\min((\cW_{i:i'+1})_{:,j}, 0)$\;\\
  $\uxi\leftarrow g_i (x) + \epsilon \lVert \cW_{i:1} \rVert_{1,row}+\sum_{i'=1}^{i-1}\sum_{j\in I_{i'}}\frac{\lxiprimej\uxiprimej}{\uxiprimej - \lxiprimej}\min(-(\cW_{i:i'+1})_{:,j}, 0)$\;\\
 }
 
 ~~Compute $D_{L-1}$ with Eq.~\ref{eq:Danddelta}, $\cW_{L:1}\leftarrow W_L D_{L-1} \cW_{L-1}, g_L (x)=W_L D_{L-1} g_{L-1}(x)+b_L$\;\\
 Compute $p^*_{\ropt}$ with Eq.~\ref{eq:pc}, $\delta_0^*\leftarrow -\epsilon \lVert c_t^\top \cW_{L:1} \rVert_{1,row}$\;\\
 $x'_i\leftarrow h_i(x+\delta_0^*)$  \textbf{for} $i=1$ to $L$\;\\
 $d(x,\delta_0^*,W,b)\leftarrow c_t^\top x'_L - p^*_{\ropt}$\;\\
 Compute $r(x,\delta_0^*, W, b)$ with Eq.~\ref{eq:unstable_gap}
 \caption{Computing the Fast-Lin Bounds and Regularizers for $\ell_\infty$ Norm}
 \label{alg:opt}
\end{algorithm*}

The tightest indicator should give the minimum gap $p^*_{\popt}-p^*_{\ropt}$, where we need to find the optimal perturbation for~\ref{eq:nonconvex}. 
However, the minimum gap cannot be found in polynomial time, due to the non-convex nature of~\ref{eq:nonconvex}.
\citep{weng2018towards} also proved that there is no polynomial time algorithm to find the minimum $\ell_1$-norm adversarial distortion with $0.99\ln n$ approximation ratio unless NP=P, a problem equivalent to finding the minimum margin here.

\subsection{A Better Indicator for Regularization: Difference in Optimal Pre-activations}
Despite being intuitive and is able to achieve improvements, Eq.~\ref{eq:distillation} which enforces similarity between objective values does not work as good as enforcing similarity between the solutions $\{z^*_i, x^*_i\}_{i=1}^{L}$ and $\{z'_i, x'_i\}_{i=1}^{L}$ in practice, an approach we will elaborate below.
For both CROWN and Fast-Lin, unless $d(x,\delta_0^*,W,b)=0$, $\{z^*_i, x^*_i\}_{i=1}^{L}$ may deviate a lot from $\{z'_i, x'_i\}_{i=1}^{L}$ and does not correspond to any ReLU network, even if $d(x,\delta_0^*,W,b)$ may seem small.
For example, it is possible that $z^*_{ij}<0$ for a given $z^*_1$, but a ReLU network will always have $z'_{ij}\ge 0$.

We find an alternative regularizer more effective at improving verifiable accuracy. The regularizer encourages the feasible solution $\{z'_i, x'_i\}_{i=1}^{L}$ of~\ref{eq:nonconvex} to {\em exactly} match the feasible optimal solution $\{z^*_i, x^*_i\}_{i=1}^{L}$ of $\mathcal{C}$.
Since we are adopting the layer-wise convex relaxation, the optimal solutions of the unstable neurons can be considered independently. 

Here we derive a sufficient condition for tightness for Fast-Lin, which also serves as a sufficient condition for CROWN. 
For linear programming, the optimal solution occurs on the boundaries of the feasible set. 
Since Fast-Lin is a layer-wise convex relaxation, the solution to each of its neurons in $z_i$ can be considered independently, and therefore for a specific layer $i$ and $j\in \cI_i$, the pair of optimal solutions $(x^*_{ij}, z^*_{i+1,j})$ should occur on the boundary in the right of Figure~\ref{fig:relaxation}. 
It follows that the only 3 optimal solutions $(x_{ij}^*, z_{i+1,j}^*)$ of $\mathcal{C}$ that are also feasible for~\ref{eq:nonconvex} are $(\lxij, 0), (\uxij, \uxij)$ and $(0,0)$.
Notice they are also in the intersection between the boundary of CROWN and \ref{eq:nonconvex}.

In practice, out of efficiency concerns, both Fast-Lin and CROWN identify the boundaries that the optimal solution lies on and computes the optimal value by accumulating the contribution of each layer in a backward pass, without explicitly computing $\{z^*_i, x^*_i \}_{i=1}^L$ for each layer with a forward pass (see Appendix~\ref{sec:cr_sols} for more details).
It is therefore beneficial to link the feasible solutions of~\ref{eq:nonconvex} to the parameters of the boundaries. 
Specifically, let $\delta^*_{ij}\in \{\underline{b}_{ij}, \bar{b}_{ij}\}$ be the intercept of the line that the optimal solution $(x_{ij}^*, z_{i+1,j}^*)$ lies on.
We want to find a rule based on $\{\delta^*_i\}_{i=1}^L$ to determine whether the bound is tight from the values of $\{x'_i\}_{i=1}^L$.
For both Fast-Lin and CROWN, $\underline{b}_{ij} = 0, \bar{b}_{ij}=\udelta$.
For Fast-Lin, when $\delta^*_{ij}=\bar{b}_{ij}$, only $(x_{ij}^*, z_{i+1,j}^*)=(\lxij, 0)$ or $(\uxij, \uxij)$ are fesible for~\ref{eq:nonconvex}; when $\delta^*_{ij}=\underline{b}_{ij}$, only $(x_{ij}^*, z_{i+1,j}^*)=(0, 0)$ is feasible for~\ref{eq:nonconvex}.
Meanwhile, $z'_{i+1,j}=\max(x'_{ij},0)$ is deterministic if $x'_{ij}$ is given. 
Therefore, when the bound is tight for Fast-Lin, if $\delta^*_{ij}=\underline{b}_{ij}$, then $x'_{ij}=0$.
Otherwise, if $\delta^*_{ij}=\bar{b}_{ij}$, and $x'_{ij}=\lxij$ or $\uxij$.
For CROWN, this condition is also feasible, though it could be either $x'_{ij}\le 0$ or $x'_{ij}\ge 0$ when $\delta^*_{ij}=0$, depending on the optimal slope $\underline{D}^{(L)}_{ij}$.


Indeed, we achieve optimal tightness ($p_{\mathcal{C}}^*=p_{\mathcal{O}}^*$) for both Fast-Lin and CROWN if $\xij'$ satisfy these conditions at \emph{all} unstable neurons. 
Specifically, 

\begin{proposition}\label{prop:opt}
Assume $\{z_i', x_i'\}_{i=1}^L$ is obtained by the ReLU network $h_L$ with input $z_1'$, and $\{\delta_{i}^*\}_{i=0}^{L-1}$ is the optimal solution of Fast-Lin or CROWN. If $z'_1=x+\delta_0^*$, and $x'_{ij}\in \mathcal{S}(\delta^*_{ij})$ for all $i=1,...,L-1, j\in \cI_i$, then $\{z_i', x_i'\}_{i=1}^{L}$ is an optimal solution of ~\ref{eq:nonconvex}, Fast-Lin and CROWN. 
Here 
\begin{equation*}
 \mathcal{S}(\delta^*_{ij})=\begin{cases}
 \{\lxij, \uxij\} & \text{if } \delta^*_{ij} = \udelta,\\
 \{0\} & \text{if } \delta^*_{ij} = 0.
 \end{cases}
\end{equation*}
\end{proposition}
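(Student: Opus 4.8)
The plan is to show that, under the stated hypotheses, the forward pass $\{z_i', x_i'\}_{i=1}^L$ reproduces \emph{exactly} an optimal solution $\{z_i^*, x_i^*\}_{i=1}^L$ of $\ropt$, and then invoke the general principle already established above: once an optimal solution of $\ropt$ coincides with a feasible solution of $\popt$, we get $p^*_{\popt}=p^*_{\ropt}$. Concretely, $\ropt$ is a bounded LP whose Fast-Lin/CROWN closed form yields an optimal solution with input $z_1^*=x+\delta_0^*$ and, for each unstable neuron $j\in\cI_i$, places the optimal pair $(x_{ij}^*, z_{i+1,j}^*)$ on the boundary line whose intercept is $\delta_{ij}^*$. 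Since $\{z_i', x_i'\}$ is a genuine forward pass with $z_1'=x+\delta_0^*\in\mathcal{B}_{p,\epsilon}(x)$, it is automatically feasible for $\popt$; it therefore suffices to prove $c_t^\top x_L'=p^*_{\ropt}$, because the chain $p^*_{\ropt}\le p^*_{\popt}\le c_t^\top x_L'=p^*_{\ropt}$ then forces equality throughout and makes $\{z_i', x_i'\}$ optimal for $\popt$, Fast-Lin, and CROWN at once.

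First I would establish $\{z_i', x_i'\}=\{z_i^*, x_i^*\}$ by induction on the layer index. For the base case, $z_1'=x+\delta_0^*=z_1^*$, hence $x_1'=W_1 z_1'+b_1=x_1^*$. For the inductive step, assume $z_i'=z_i^*$, so that $x_i'=W_i z_i'+b_i=x_i^*$ by the shared linear constraint. Because $z_1'$ lies in the perturbation ball, the precomputed bounds hold along this forward pass, i.e. $\lxij\le x_{ij}'\le\uxij$ for every $j$; consequently, on the stable neurons $j\in\cI_i^{-}\cup\cI_i^{+}$ the ReLU constraint and the relaxed constraint both collapse to the same linear equality ($z_{i+1,j}=0$ or $z_{i+1,j}=x_{ij}$), so $z_{i+1,j}'=z_{i+1,j}^*$ with no further work.

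The heart of the argument is the unstable neurons, where I would use $x_{ij}'=x_{ij}^*\in\mathcal{S}(\delta_{ij}^*)$ together with the explicit boundary lines. When $\delta_{ij}^*=\udelta$ (the upper chord, which CROWN shares with Fast-Lin), the relaxed optimum is $z_{i+1,j}^*=\slope\,x_{ij}^*+\udelta$; evaluating at the two admissible values $x_{ij}^*\in\{\lxij,\uxij\}$ gives $z_{i+1,j}^*=0$ and $z_{i+1,j}^*=\uxij$ respectively, which equal $\max(x_{ij}^*,0)$ since $\lxij<0<\uxij$, so $z_{i+1,j}'=z_{i+1,j}^*$. When $\delta_{ij}^*=0$ (the lower line through the origin), the condition forces $x_{ij}^*=0$, at which point the lower line gives $z_{i+1,j}^*=0=\max(0,0)=z_{i+1,j}'$ \emph{regardless of the lower slope} — which is exactly why the single set $\mathcal{S}(0)=\{0\}$ is sufficient for CROWN even though its lower slope differs from Fast-Lin's. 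This closes the induction, yields $x_L'=x_L^*$ and $c_t^\top x_L'=p^*_{\ropt}$, and completes the reduction of the first paragraph.

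The step I expect to be most delicate is this unstable-neuron computation, because it is where the active boundary recorded by $\delta_{ij}^*$ must be matched against the ReLU graph: one has to verify that the three corners $(\lxij,0)$, $(\uxij,\uxij)$, $(0,0)$ are the \emph{only} points at which each active boundary line meets the ReLU graph, so that placing $x_{ij}'$ on a point of $\mathcal{S}(\delta_{ij}^*)$ genuinely forces $z_{i+1,j}'$ onto the relaxed optimum rather than merely into the feasible region. A secondary point needing care is the self-consistency of the induction — that ``pre-activations agree at layer $i$'' is precisely the premise required to apply the corner condition at layer $i$ and thereby propagate the match to layer $i+1$ — together with the validity of the interval bounds $[\lxij,\uxij]$ along the particular perturbed input $z_1'=x+\delta_0^*$, which follows because those bounds hold uniformly over $\mathcal{B}_{p,\epsilon}(x)$.
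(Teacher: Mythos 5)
Your proof is correct and follows essentially the same route as the paper's: define the relaxed optimal trajectory $z_1^*=x+\delta_0^*$, $x_i^*=W_iz_i^*+b_i$, $z_{i+1}^*=D_i x_i^*+\delta_i^*$, and show by layer-wise induction (splitting into $\cI_i^+$, $\cI_i^-$, and the two intercept cases on $\cI_i$) that the ReLU forward pass coincides with it, so that $c_t^\top x_L'=p^*_{\ropt}$ and the sandwich $p^*_{\ropt}\le p^*_{\popt}\le c_t^\top x_L'$ closes the argument. Your explicit remark that the $\delta^*_{ij}=0$ case works for CROWN regardless of its lower slope is exactly the observation the paper relies on.
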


We provide the proof of this simple proposition in the Appendix.

It remains to be discussed how to best enforce the similarity between the optimal solutions of~\ref{eq:nonconvex} and Fast-Lin or CROWN. 
Like before, we choose to enforce the similarity between $\{x'_i\}_{i=1}^{L}$ and the closest optimal solution of Fast-Lin, where $\{x'_i\}_{i=1}^{L}$ is constructed by setting $x'_1=x^*_1=W_1(x+\delta_0^*)+b_1$ and pass $x'_1$ through the ReLU network to obtain $x'_i=h_i(x+\delta^*_0)$.
By Proposition~\ref{prop:opt}, the distance can be computed by considering the values of the intercepts $\{\delta^*_i\}_{i=1}^{L-1}$ as
\begin{equation}\label{eq:unstable_gap}
    \begin{split}
    r(x,\delta_0^*, W, b) &= \frac{1}{\sum_{i=1}^{L-1}|\cI_i|}\sum_{i=1}^{L-1}\left(\sum_{\substack{j\in \cI_i\\\delta_{ij}^*=0}} |x'_{ij}| \right. \nonumber\\
    &\quad \left. +\sum_{\substack{j\in \cI_i\\\delta_{ij}^*\neq 0}} \min(|x'_{ij}-\lxij|, |x'_{ij}-\uxij|)\right),
    \end{split}
\end{equation}
where the first term corresponds to $\delta^*_{ij}=0$ and the condition $\xij'\in \{0\}$, and the second term corresponds to $\delta^*_{ij}=\udelta$ and the condition $\xij'\in \{\lxij, \uxij\}$.
To minimize the second term, the original ReLU network only needs to be optimized towards the nearest feasible optimal solution.
It is easy to see from Proposition~\ref{prop:opt} that if $r(x,\delta_0^*,W,b)=0$, then $p^*_{\popt}=p^*_{\ropt}$, where $\ropt$ could be both Fast-Lin or CROWN.

Compared with $d(x,\delta_0^*,W,b)$, $ r(x,\delta_0^*, W, b)$ puts more constraints on the parameters $W,b$ , since it requires all unstable neurons of the ReLU network to match the optimal solutions of Fast-Lin, instead of only matching the objective values $p'_{\popt}$ and $p^*_{\ropt}$. 
In this way, it provides stronger guidance towards a network whose optimal solution for~\ref{eq:nonconvex} and Fast-Lin or CROWN agree.
However, again, this is not equivalent to trying to kill all unstable neurons, since Fast-Lin can be tight even when unstable neurons exist.

\subsection{Certified Robust Training in Practice}
In practice, for classification problems with more than two classes, we will compute the lower bound of the margins with respect to multiple classes. 
Denote $\mathbf{p}^*_{\ropt}$ and $\mathbf{p}^*_{\popt}$ as the concatenated vector of lower bounds of the relaxed problem and original problem for multiple classes, and $d_t,r_t$ as the regularizers for the margins with respect to class $t$. 

Together with the regularizers, we optimize the following objective
\begin{align}\label{eq:obj}
\underset{W,b}{\text{minimize }} L_{CE}(-\mathbf{p}^*_{\ropt}, y) + &\lambda \sum_{t}d_t(x,\delta_0^*, W,b) \nonumber \\
&\quad + \gamma \sum_{t}r_t(x,\delta_0^*, W,b),
\end{align}
where $L_{CE}(-\mathbf{p}^*_{\ropt}, y)$ is the cross entropy loss with label $y$, as adopted by many related works~\citep{wong2018scaling,gowal2018effectiveness}, and we have implicitly abbreviated the inner maximization problem w.r.t. $\{\delta_i\}_{i=0}^{L-1}$ into the optimal values $\mathbf{p}^*_{\ropt}$ and solution $\delta_0^*$. 
More details for computing the intermediate and output bounds can be found in Algorithm~\ref{alg:opt}, where we have used $\lVert \cdot \rVert_{1,row}$ to denote row-wise $\ell_1$ norm, and $(\cdot)_{:,j}$ for taking the $j$-th column.

One major challenge of the convex relaxation approach is the high memory consumption. 
To compute the bounds $\lxi,\uxi$, we need to pass an identity matrix with the same number of diagonal entries as the total dimensions of the input images, which can make the batch size thousands of times larger than usual.
To mitigate this, one can adopt the random projection from~\cite{wong2018scaling}, which projects identity matrices into lower dimensions as $\cW_{i:1}R$ to estimate the norm of $\cW_{i:1}$. 
Such projections add noise/variance to $\lxi, \uxi$, and the regularizers are affected as well. 

\section{Experiments}
\begin{table*}[!ht]
\centering
\resizebox{0.8\linewidth}{!}{%
\begin{tabular}{llll|cc|rrr}
\hline
\textbf{Dataset} & \textbf{Model}  & Base Method &~~~~$\epsilon$~~ & \textbf{$\lambda$} & \textbf{$\gamma$} & \textbf{Rob. Err} & \textbf{PGD Err} & \textbf{Std. Err}  \\ \hline
 MNIST           & 2x100, Exact    &  CP    &  0.1             &           0        &          0        &     14.85\%     &   \textbf{10.9\%}  & \textbf{3.65\%}         \\ 
  MNIST           & 2x100, Exact   &  CP  &   0.1             &           2e-3        &          1        &     \textbf{13.32\%}     &   \textbf{10.9\%}   & 4.73\%        \\ 
 \hline
 MNIST           & \texttt{Small}, Exact   &   CP &   0.1             &           0        &          0        &     4.47\%     &   2.4\%   & 1.19\%         \\ 
 MNIST           & \texttt{Small}, Exact   &   CP &  0.1             &         5e-3       &        5e-1       &  \textbf{3.65\%}    &  \textbf{2.2\%}  &  1.09\%    \\ 
 MNIST           &    -       &   Best of PV$^1$  & 0.1             &         -       &      -      &  4.44\%    &  2.87\%  &  1.20\%    \\
 MNIST           &    -       &   Best of RS$^2$  &     0.1             &         -       &        -       &  4.40\%    &  3.42\%  &  \textbf{1.05\%}    \\
 \hline
 
  MNIST           & \texttt{Small}         &    CP &  0.1             &           0        &          0        &   4.47\%        &    \textbf{3.3\%}  &    \textbf{1.19\%}               \\ 
 MNIST           & \texttt{Small}          &    CP &  0.1             &           0       &        5e-1       &  \textbf{4.32\%}    &  3.4\%  & 1.51\%    \\ 
 \hline
 MNIST           & \texttt{Large}          &    DAI$^3$           &     0.1             &           -       &        -       &     3.4\%      &   2.4\%   &  1.0\%     \\ \hline
  MNIST           & 2x100, Exact   &   CP  &  0.3             &           0        &          0        &     61.39\%     &   49.4\%   & 33.16\%         \\ 
  MNIST           & 2x100, Exact   &   CP  &  0.3             &           5e-3        &          5e-3        &     \textbf{56.05\%}     &   \textbf{44.3\%}   & \textbf{26.10\%}        \\ 
  \hline
 MNIST           & \texttt{Small}, Exact   &    CP  &    0.3             &           0        &          0        &   31.25\%      &     15.0\% &   7.88\%        \\ 
 MNIST           & \texttt{Small}, Exact   &     CP &0.3             &         5e-3       &        5e-1       &  \textbf{29.65\%}  &    \textbf{13.7\%}   &    \textbf{7.28\%}    \\ 
 \hline
 
  MNIST           & \texttt{Small}         &     CP  & 0.3             &           0        &          0        &   42.7\%         &  26.0\%      & 15.93\%               \\ 
 MNIST           & \texttt{Small}          &     CP  & 0.3             &         2e-3       &        2e-1       &  \textbf{41.36\%} &   \textbf{24.0\%}   & \textbf{14.29\%}    \\ \hline
   MNIST           & \texttt{XLarge}          &    IBP$^4$           &     0.3$^*$             &           -       &        -       &     8.05\%      &   6.12\%   &  \textbf{1.66\%}     \\ 
  MNIST           & \texttt{XLarge}          &    CROWN-IBP           &     0.3$^*$            &           -       &        -       &     7.01\%      &   5.88\%   &  1.88\%     \\ 
  MNIST           & \texttt{XLarge}          &    CROWN-IBP             &     0.3$^*$            &           0       &        5e-1       &    \textbf{6.64\%}      &   -    &  1.76\%     \\ \hline
 CIFAR10           & \texttt{Small}          &   CP  &  2/255          &           0        &          0        &      53.19\%      &    48.0\%   &    38.19\%    \\ 
 CIFAR10           & \texttt{Small}          &   CP  & 2/255          &         5e-3       &        5e-1       & \textbf{51.52\%}  &  \textbf{47.0\%}    & \textbf{37.30\%} \\ \hline    
   CIFAR10           & \texttt{Large}     &    DAI$^3$          &    2/255          &         -      &        -       & 61.4\%  & 55.6\% & 55.0\% \\ 
   CIFAR10           & \texttt{XLarge}     &   IBP$^4$          &    2/255$^*$          &         -      &        -       & 49.98\%  & 45.09\% & 29.84\% \\ 
   CIFAR10           & \texttt{XLarge}     &   CROWN-IBP          &    2/255$^*$          &         -      &        -       & 46.03\%  & 40.28\% & 28.48\% \\ 
  CIFAR10           & \texttt{Large}          &   CP  & 2/255          &           0        &          0        &      45.78\%      &  \textbf{38.5\%}  &   \textbf{29.42\%}    \\ 
 CIFAR10           & \texttt{Large}          &   CP  & 2/255          &         5e-3       &        5e-1       &     \textbf{45.19\%}   &  38.8\%  &  29.76\% \\ 
  \hline   
 
 CIFAR10           & \texttt{Small}          &   CP  & 8/255          &           0        &          0        &      75.45\%  &  68.3\%     &      62.79\%    \\ 
 CIFAR10           & \texttt{Small}          &   CP  & 8/255          &         1e-3       &        1e-1       & \textbf{74.70\%}   &  \textbf{67.9\%} & \textbf{62.50\%} \\ \hline         
  CIFAR10           & \texttt{Large}          &   CP  & 8/255          &           0        &          0        &      74.04\%  &   68.8\%      &     \textbf{59.73\%}    \\ 
 CIFAR10           & \texttt{Large}          &   CP  & 8/255          &         5e-3       &        5e-1       &   \textbf{73.74\%}  & \textbf{68.5\%}      & 59.82\%  \\ \hline   
 CIFAR10           & \texttt{XLarge}          &    IBP$^4$           &        8/255$^*$             &           -       &        -       &     67.96\%      &   \textbf{65.23\%}   &  \textbf{50.51\%}     \\ 
  CIFAR10           & \texttt{XLarge}          &    CROWN-IBP           &     8/255$^*$            &           -       &        -       &    66.94\%      &   65.42\%   &  54.02\%     \\ 
  CIFAR10           & \texttt{XLarge}          &    CROWN-IBP             &     8/255$^*$            &           0       &        5e-1       &    \textbf{66.64\%}      &   -    &  53.78\%     \\ \hline
\end{tabular}
}
\caption{ Results on MNIST, and CIFAR10 with small networks, large networks,
and different coefficients of $\regd$, $\regr$. All entries with positive $\lambda$ or $\gamma$ are using our regularizers. For all models not marked as ``Exact'', we have projected the input dimension of $\cW_{i:1}$ to 50, the same as \cite{wong2018scaling}. For $\epsilon$ values with $^*$, larger $\epsilon$ is used for training. $\epsilon=0.3,2/255,8/255$ correspond to using $\epsilon=0.4,2.2/255,8.8/255$ for training respectively. For the methods: $^1$:~\citep{dvijotham2018training}; $^2$:~\citep{xiao2018training}; $^3$:~\citep{mirman2018differentiable}; $^4$~\citep{gowal2018effectiveness}.   }
\label{table:results}
\end{table*}

We evaluate the proposed regularizer on two datasets (MNIST and CIFAR10) with two different $\epsilon$ each.
We consider only $\ell_\infty$ adversaries. 
Our implementation is based on the code released by~\cite{wong2018scaling} for Convex Outer Adversarial Polytope (CP), and~\cite{zhang2019towards} for CROWN-IBP, so when $\lambda=\gamma=0$, we obtain the same results as CP or CROWN-IBP.
We use up to 4 GTX 1080Ti or 2080Ti for all our experiments.

\paragraph{Architectures: } 
We experiment with a variety of different network structures, including a MLP (2x100) with two 100-neuron hidden layers as~\citep{salman2019convex}, two Conv Nets (\texttt{Small} and \texttt{Large}) that are the same as~\citep{wong2018scaling}, a family of 10 small conv nets and a family of 8 larger conv nets, all the same as~\citep{zhang2019towards}, 
and also the same 5-layer convolutional network (\texttt{XLarge}) as in the latest version of CROWN-IBP~\citep{zhang2019towards}. 

The \texttt{Small} convnet has two convolutional layers of 16, 32 output channels each and two FC layers with 100 hidden neurons. 
The \texttt{Large} convnet has four Conv layers with 32, 32, 64 and 64 output channels each, plus three FC layers of 512 neurons. 
The \texttt{XLarge} convnet has five conv lyaers with 64, 64, 128, 128, 128 output channels each, with two FC layers of 512 neurons.

\paragraph{Hyper-parameters:}
For experiments on CP, we use Adam~\citep{kingma2014adam} with a learning rate of $10^{-3}$ and no weight decay. 
Like \citep{wong2018scaling}, we train the models for 80 epochs, where in the first 20 epochs the learning rate is fixed but the $\epsilon$ increases from 0.01/0.001 to its maximum value for MNIST/CIFAR10, 
and in the following epochs, we reduce learning rate by half every 10 epochs. 
Unless labelled with ``Exact" in the model names of Table~\ref{table:results}, we use random projection as in~\cite{wong2018scaling} for CP experiments to reduce the memory consumption.
Due to the noisy estimation of the optimal solutions from these random projections, we also adopt a warm-up schedule for the regularizers in all CP experiments to prevent over-regularization, where $\lambda, \gamma$ increases form 0 to the preset values in the first 20 epochs. 

For CROWN-IBP, we use the updated expensive training schedule as~\citep{zhang2019towards}, which uses 200 epochs with batch size 256 for MNIST and 3200 epochs with batch size 1024 for CIFAR10. 
We also use the afore-mentioned warm up schedule for $\lambda, \gamma$.

\begin{table*}[h!]
\centering
\scalebox{0.7}{
 \begin{tabular}{c|c|c|c|c|c|c|c|c|c|c|c} 
\toprule
Method&error&model A&model B&model C&model D&model E&model F&model G&model H&model I&model J\\\midrule
\multirow{2}{*}{Copied}&std. (\%)&$5.97\pm.08$&$3.20\pm0$&$6.78\pm.1$&$3.70\pm.1$&$3.85\pm.2$&$3.10\pm.1$&$4.20\pm.3$&$2.85\pm.05$&$3.67\pm.08$&$2.35\pm.09$\\
&verified (\%)&$15.4\pm.08$&$10.6\pm.06$&$16.1\pm.3$&$11.3\pm.1$&$11.7\pm.2$&$9.96\pm.09$&$12.2\pm.6$&$9.90\pm.2$&$11.2\pm.09$&$9.21\pm.3$\\\hline
\multirow{2}{*}{Baseline}&std.  (\%)&$5.65\pm.04$&$3.23\pm.3$&$4.70\pm.4$&$2.94\pm.05$&$6.39\pm.3$&$2.89\pm.05$&$4.11\pm.3$&$2.55\pm.1$&$3.30\pm.4$&$2.56\pm.1$\\
&verified  (\%)&$14.70\pm.07$&$10.65\pm.3$&$13.78\pm1.$&$9.89\pm.3$&$15.26\pm.6$&$9.54\pm.1$&$12.06\pm.9$&$8.92\pm.2$&$10.81\pm.8  $&$9.67\pm.4$\\\hline
\multirow{2}{*}{With $r$}&std.  (\%)&$5.80\pm.04$&$3.16\pm.06$&$5.16\pm.3$&$3.06\pm.05$&$6.15\pm.2$&$2.95\pm.05$&$3.83\pm.2$&$2.57\pm.1$&$3.29\pm.04$&$2.73\pm.4$\\
&verified  (\%)&$14.54\pm.07$&$10.46\pm.08$&$13.37\pm.3$&$9.91\pm.3$&$14.43\pm.3$&$9.48\pm.2$&$11.01\pm.5$&$8.83\pm.3$&$10.18\pm.5$&$9.49\pm.4$\\
\bottomrule
 
 \end{tabular}
 }
 \caption{{Mean and standard deviation of the family of 10small models on MNIST with $\epsilon=0.3$. Here we use a cheaper training schedule with a total of 100 epochs, all in the same setting as the IBP baseline results of~\cite{zhang2019towards}. Baseline is CROWN-IBP with epoch=140 and lr\_decay\_step=20. Like in CROWN-IBP, we run each model 5 times to compute the mean and standard deviation. ``Copied" are results from~\citep{zhang2019towards}.}}
 \label{tab:stable}
\end{table*}

\subsection{Improving Convex Outer Adversarial Polytope} 
Table~\ref{table:results} shows comparisons with various approaches.
All of our baseline implementations of CP have already improved upon~\citep{wong2018scaling}. 
After adding the proposed regularizers, the certified robust accuracy is further improved upon our baseline in all cases. 
We also provide results against a 100-step PGD adversary for our CP models.
Since both PGD errors and standard errors are reduced in most cases, the regularizer should have improved not only the certified upper bound, but also improved the actual robust error. 

Despite the fact that we start from a stronger baseline, the relative improvement on 2x100 with our regularizer (10.3\%/8.7\%) are comparable to the improvements (5.9\%/10.0\%) under the same setting from~\citep{salman2019convex}, which  solves for the lower and upper bounds of all intermediate layers via the tightest layer-wise LP relaxation (Figure~\ref{fig:relaxation}).
This indicates that the improvement brought by using our regularizer during training and efficient verifiers (Fast-Lin in this case) for verification is comparable with using the expensive and unstable optimal layer-wise convex relaxation.

Our results with \texttt{Small} are better than the best results of~\citep{dvijotham2018training, xiao2018training} on MNIST with $\epsilon=0.1$, though not as good as the best of ~\citep{mirman2018differentiable}, which uses a larger model.
When applying the same model on CIFAR10, we achieve better robust error than~\citep{mirman2018differentiable}.

The relative improvements in certified robust error for $\epsilon=0.1$ and 0.3 are 18\%/3.4\% for the small exact model on MNIST, compared with 0.03\%/3.13\% for the random projection counterparts. 
This is mainly because in the exact models, we have better estimates of $\lxi,\uxi$. Still, these consistent improvements validate that our proposed regularizers improve the performance.

We also give ablation studies of the two regularizers in Appendix~\ref{sec:ablation} and Table~\ref{tab:ablation}.

\subsection{Improving CROWN-IBP}
In its first stage of training, CROWN-IBP~\cite{zhang2019towards} trains the network with CROWN~\cite{zhang2018efficient} to compute the bounds of the final outputs based on the interval bounds of intermediate activations, and in its second stage, CROWN-IBP uses only IBP.
We apply our regularizer to the first stage of CROWN-IBP, using interval bounds to over-approximate $\underline{x}_i$ and $\bar{x}_i$ required by our second regularizer on the optimal pre-activations, to obtain a better intialization for its second stage, and demonstrate improvements in Table~\ref{table:results}.

Methods based on interval bounds, including IBP, CROWN-IBP and DAI~\cite{mirman2018differentiable}, tend to behave not as good as CP when $\epsilon$ is small. 
Our regularizers are able to further improve CP on CIFAR10 ($\epsilon=2/255$), and demonstrate the best result among all approaches compared in this setting, as shown in Table~\ref{table:results}.
To our knowledge, these are the best results for CIFAR10 ($\epsilon=2/255$) reported on comparable sized models. 
By using our regularizers on CROWN-IBP to provide a better initialization for the later training stage of pure IBP, our method also achieves the best certified accuracy on MNIST ($\epsilon=0.3$) and CIFAR10 ( $\epsilon=8/255$).


To verify the significance of the regularizers, Table~\ref{tab:stable} shows the mean and variance of the results with the family smaller models on MNIST, demonstrating consistent improvements of our model, while Table~\ref{tab:mnist_large} (in the appendix) gives the best, median and worst case results with the large models on the MNIST dataset and compares with both IBP and CROWN-IBP.



\section{Conclusions}
We propose two regularizers based on the convex relaxation bounds for training robust neural networks that can be better verified by efficient verifiers including Fast-Lin and IBP for certifiable robustness. Extensive experiments validate that the regularizers improve robust accuracy over non-regularized baselines, and outperform state-of-the-art approaches. This work is a step towards closing the gap between certified and empirical robustness. Future directions include methods to improve computational efficiency for LP relaxations (and certified methods in general), and better ways to leverage random projections for acceleration. 

\bibliography{refs.bib}

\begin{thebibliography}{38}
\providecommand{\natexlab}[1]{#1}
\providecommand{\url}[1]{\texttt{#1}}
\expandafter\ifx\csname urlstyle\endcsname\relax
  \providecommand{\doi}[1]{doi: #1}\else
  \providecommand{\doi}{doi: \begingroup \urlstyle{rm}\Url}\fi

\bibitem[Carlini \& Wagner(2017)Carlini and Wagner]{carlini2017adversarial}
Carlini, N. and Wagner, D.
\newblock Adversarial examples are not easily detected: Bypassing ten detection
  methods.
\newblock In \emph{Proceedings of the 10th ACM Workshop on Artificial
  Intelligence and Security}, pp.\  3--14. ACM, 2017.

\bibitem[Carlini et~al.(2017)Carlini, Katz, Barrett, and
  Dill]{carlini2017provably}
Carlini, N., Katz, G., Barrett, C., and Dill, D.~L.
\newblock Provably minimally-distorted adversarial examples.
\newblock \emph{arXiv preprint arXiv:1709.10207}, 2017.

\bibitem[Cheng et~al.(2017)Cheng, N{\"u}hrenberg, and Ruess]{cheng2017maximum}
Cheng, C.-H., N{\"u}hrenberg, G., and Ruess, H.
\newblock Maximum resilience of artificial neural networks.
\newblock In \emph{International Symposium on Automated Technology for
  Verification and Analysis}, pp.\  251--268. Springer, 2017.

\bibitem[Cohen et~al.(2019)Cohen, Rosenfeld, and Kolter]{cohen2019certified}
Cohen, J.~M., Rosenfeld, E., and Kolter, J.~Z.
\newblock Certified adversarial robustness via randomized smoothing.
\newblock \emph{arXiv preprint arXiv:1902.02918}, 2019.

\bibitem[Dutta et~al.(2018)Dutta, Jha, Sankaranarayanan, and
  Tiwari]{dutta2018output}
Dutta, S., Jha, S., Sankaranarayanan, S., and Tiwari, A.
\newblock Output range analysis for deep feedforward neural networks.
\newblock In \emph{NASA Formal Methods Symposium}, pp.\  121--138. Springer,
  2018.

\bibitem[Dvijotham et~al.(2018{\natexlab{a}})Dvijotham, Gowal, Stanforth,
  Arandjelovic, O'Donoghue, Uesato, and Kohli]{dvijotham2018training}
Dvijotham, K., Gowal, S., Stanforth, R., Arandjelovic, R., O'Donoghue, B.,
  Uesato, J., and Kohli, P.
\newblock Training verified learners with learned verifiers.
\newblock \emph{arXiv preprint arXiv:1805.10265}, 2018{\natexlab{a}}.

\bibitem[Dvijotham et~al.(2018{\natexlab{b}})Dvijotham, Stanforth, Gowal, Mann,
  and Kohli]{dvijotham2018dual}
Dvijotham, K., Stanforth, R., Gowal, S., Mann, T., and Kohli, P.
\newblock A dual approach to scalable verification of deep networks.
\newblock \emph{arXiv preprint arXiv:1803.06567}, 104, 2018{\natexlab{b}}.

\bibitem[Ehlers(2017)]{ehlers2017formal}
Ehlers, R.
\newblock Formal verification of piece-wise linear feed-forward neural
  networks.
\newblock In \emph{International Symposium on Automated Technology for
  Verification and Analysis}, pp.\  269--286. Springer, 2017.

\bibitem[Fischetti \& Jo(2017)Fischetti and Jo]{fischetti2017deep}
Fischetti, M. and Jo, J.
\newblock Deep neural networks as 0-1 mixed integer linear programs: A
  feasibility study.
\newblock \emph{arXiv preprint arXiv:1712.06174}, 2017.

\bibitem[Gehr et~al.(2018)Gehr, Mirman, Drachsler-Cohen, Tsankov, Chaudhuri,
  and Vechev]{gehr2018ai2}
Gehr, T., Mirman, M., Drachsler-Cohen, D., Tsankov, P., Chaudhuri, S., and
  Vechev, M.
\newblock Ai2: Safety and robustness certification of neural networks with
  abstract interpretation.
\newblock In \emph{2018 IEEE Symposium on Security and Privacy (SP)}, pp.\
  3--18. IEEE, 2018.

\bibitem[Goodfellow et~al.(2014)Goodfellow, Shlens, and
  Szegedy]{goodfellow2014explaining}
Goodfellow, I.~J., Shlens, J., and Szegedy, C.
\newblock Explaining and harnessing adversarial examples.
\newblock \emph{arXiv preprint arXiv:1412.6572}, 2014.

\bibitem[Gowal et~al.(2018)Gowal, Dvijotham, Stanforth, Bunel, Qin, Uesato,
  Mann, and Kohli]{gowal2018effectiveness}
Gowal, S., Dvijotham, K., Stanforth, R., Bunel, R., Qin, C., Uesato, J., Mann,
  T., and Kohli, P.
\newblock On the effectiveness of interval bound propagation for training
  verifiably robust models.
\newblock \emph{arXiv preprint arXiv:1810.12715}, 2018.

\bibitem[Hein \& Andriushchenko(2017)Hein and Andriushchenko]{hein2017formal}
Hein, M. and Andriushchenko, M.
\newblock Formal guarantees on the robustness of a classifier against
  adversarial manipulation.
\newblock In \emph{Advances in Neural Information Processing Systems}, pp.\
  2266--2276, 2017.

\bibitem[Hinton et~al.(2015)Hinton, Vinyals, and Dean]{hinton2015distilling}
Hinton, G., Vinyals, O., and Dean, J.
\newblock Distilling the knowledge in a neural network.
\newblock \emph{arXiv preprint arXiv:1503.02531}, 2015.

\bibitem[Hwang et~al.(2019)Hwang, Park, Jang, Yoon, and Cho]{hwang2019puvae}
Hwang, U., Park, J., Jang, H., Yoon, S., and Cho, N.~I.
\newblock Puvae: A variational autoencoder to purify adversarial examples.
\newblock \emph{arXiv preprint arXiv:1903.00585}, 2019.

\bibitem[Katz et~al.(2017)Katz, Barrett, Dill, Julian, and
  Kochenderfer]{katz2017reluplex}
Katz, G., Barrett, C., Dill, D.~L., Julian, K., and Kochenderfer, M.~J.
\newblock Reluplex: An efficient smt solver for verifying deep neural networks.
\newblock In \emph{International Conference on Computer Aided Verification},
  pp.\  97--117. Springer, 2017.

\bibitem[Kingma \& Ba(2014)Kingma and Ba]{kingma2014adam}
Kingma, D.~P. and Ba, J.
\newblock Adam: A method for stochastic optimization.
\newblock \emph{arXiv preprint arXiv:1412.6980}, 2014.

\bibitem[Lomuscio \& Maganti(2017)Lomuscio and Maganti]{lomuscio2017approach}
Lomuscio, A. and Maganti, L.
\newblock An approach to reachability analysis for feed-forward relu neural
  networks.
\newblock \emph{arXiv preprint arXiv:1706.07351}, 2017.

\bibitem[Madry et~al.(2017)Madry, Makelov, Schmidt, Tsipras, and
  Vladu]{madry2017towards}
Madry, A., Makelov, A., Schmidt, L., Tsipras, D., and Vladu, A.
\newblock Towards deep learning models resistant to adversarial attacks.
\newblock \emph{arXiv preprint arXiv:1706.06083}, 2017.

\bibitem[Mirman et~al.(2018)Mirman, Gehr, and Vechev]{mirman2018differentiable}
Mirman, M., Gehr, T., and Vechev, M.
\newblock Differentiable abstract interpretation for provably robust neural
  networks.
\newblock In \emph{International Conference on Machine Learning}, pp.\
  3575--3583, 2018.

\bibitem[Papernot \& McDaniel(2018)Papernot and McDaniel]{papernot2018deep}
Papernot, N. and McDaniel, P.
\newblock Deep k-nearest neighbors: Towards confident, interpretable and robust
  deep learning.
\newblock \emph{arXiv preprint arXiv:1803.04765}, 2018.

\bibitem[Raghunathan et~al.(2018)Raghunathan, Steinhardt, and
  Liang]{raghunathan2018certified}
Raghunathan, A., Steinhardt, J., and Liang, P.
\newblock Certified defenses against adversarial examples.
\newblock \emph{arXiv preprint arXiv:1801.09344}, 2018.

\bibitem[Salman et~al.(2019)Salman, Yang, Zhang, Hsieh, and
  Zhang]{salman2019convex}
Salman, H., Yang, G., Zhang, H., Hsieh, C.-J., and Zhang, P.
\newblock A convex relaxation barrier to tight robust verification of neural
  networks.
\newblock \emph{NeurIPS}, 2019.

\bibitem[Samangouei et~al.(2018)Samangouei, Kabkab, and
  Chellappa]{samangouei2018defense}
Samangouei, P., Kabkab, M., and Chellappa, R.
\newblock Defense-gan: Protecting classifiers against adversarial attacks using
  generative models.
\newblock \emph{arXiv preprint arXiv:1805.06605}, 2018.

\bibitem[Scheibler et~al.(2015)Scheibler, Winterer, Wimmer, and
  Becker]{scheibler2015towards}
Scheibler, K., Winterer, L., Wimmer, R., and Becker, B.
\newblock Towards verification of artificial neural networks.
\newblock In \emph{MBMV}, pp.\  30--40, 2015.

\bibitem[Shafahi et~al.(2019)Shafahi, Najibi, Ghiasi, Xu, Dickerson, Studer,
  Davis, Taylor, and Goldstein]{shafahi2019adversarial}
Shafahi, A., Najibi, M., Ghiasi, A., Xu, Z., Dickerson, J., Studer, C., Davis,
  L.~S., Taylor, G., and Goldstein, T.
\newblock Adversarial training for free!
\newblock \emph{arXiv preprint arXiv:1904.12843}, 2019.

\bibitem[Shan et~al.(2019)Shan, Willson, Wang, Li, Zheng, and
  Zhao]{shan2019gotta}
Shan, S., Willson, E., Wang, B., Li, B., Zheng, H., and Zhao, B.~Y.
\newblock Gotta catch 'em all: Using concealed trapdoors to detect adversarial
  attacks on neural networks, 2019.

\bibitem[Singh et~al.(2018)Singh, Gehr, Mirman, P{\"u}schel, and
  Vechev]{singh2018fast}
Singh, G., Gehr, T., Mirman, M., P{\"u}schel, M., and Vechev, M.
\newblock Fast and effective robustness certification.
\newblock In \emph{Advances in Neural Information Processing Systems}, pp.\
  10802--10813, 2018.

\bibitem[Szegedy et~al.(2013)Szegedy, Zaremba, Sutskever, Bruna, Erhan,
  Goodfellow, and Fergus]{szegedy2013intriguing}
Szegedy, C., Zaremba, W., Sutskever, I., Bruna, J., Erhan, D., Goodfellow, I.,
  and Fergus, R.
\newblock Intriguing properties of neural networks.
\newblock \emph{arXiv preprint arXiv:1312.6199}, 2013.

\bibitem[Tjeng et~al.(2017)Tjeng, Xiao, and Tedrake]{tjeng2017evaluating}
Tjeng, V., Xiao, K., and Tedrake, R.
\newblock Evaluating robustness of neural networks with mixed integer
  programming.
\newblock \emph{arXiv preprint arXiv:1711.07356}, 2017.

\bibitem[Uesato et~al.(2018)Uesato, O'Donoghue, Oord, and
  Kohli]{uesato2018adversarial}
Uesato, J., O'Donoghue, B., Oord, A. v.~d., and Kohli, P.
\newblock Adversarial risk and the dangers of evaluating against weak attacks.
\newblock \emph{arXiv preprint arXiv:1802.05666}, 2018.

\bibitem[Weng et~al.(2018)Weng, Zhang, Chen, Song, Hsieh, Boning, Dhillon, and
  Daniel]{weng2018towards}
Weng, T.-W., Zhang, H., Chen, H., Song, Z., Hsieh, C.-J., Boning, D., Dhillon,
  I.~S., and Daniel, L.
\newblock Towards fast computation of certified robustness for relu networks.
\newblock \emph{ICML}, 2018.

\bibitem[Wong \& Kolter(2017)Wong and Kolter]{wong2017provable}
Wong, E. and Kolter, J.~Z.
\newblock Provable defenses against adversarial examples via the convex outer
  adversarial polytope.
\newblock \emph{arXiv preprint arXiv:1711.00851}, 2017.

\bibitem[Wong et~al.(2018)Wong, Schmidt, Metzen, and Kolter]{wong2018scaling}
Wong, E., Schmidt, F., Metzen, J.~H., and Kolter, J.~Z.
\newblock Scaling provable adversarial defenses.
\newblock In \emph{Advances in Neural Information Processing Systems}, pp.\
  8400--8409, 2018.

\bibitem[Xiao et~al.(2018)Xiao, Tjeng, Shafiullah, and Madry]{xiao2018training}
Xiao, K.~Y., Tjeng, V., Shafiullah, N.~M., and Madry, A.
\newblock Training for faster adversarial robustness verification via inducing
  relu stability.
\newblock \emph{arXiv preprint arXiv:1809.03008}, 2018.

\bibitem[Zhang et~al.(2019)Zhang, Zhang, Lu, Zhu, and Dong]{zhang2019you}
Zhang, D., Zhang, T., Lu, Y., Zhu, Z., and Dong, B.
\newblock You only propagate once: Painless adversarial training using maximal
  principle.
\newblock \emph{arXiv preprint arXiv:1905.00877}, 2019.

\bibitem[Zhang et~al.(2018)Zhang, Weng, Chen, Hsieh, and
  Daniel]{zhang2018efficient}
Zhang, H., Weng, T.-W., Chen, P.-Y., Hsieh, C.-J., and Daniel, L.
\newblock Efficient neural network robustness certification with general
  activation functions.
\newblock In \emph{Advances in Neural Information Processing Systems}, pp.\
  4939--4948, 2018.

\bibitem[Zhang et~al.(2020)Zhang, Chen, Xiao, Li, Boning, and
  Hsieh]{zhang2019towards}
Zhang, H., Chen, H., Xiao, C., Li, B., Boning, D., and Hsieh, C.-J.
\newblock Towards stable and efficient training of verifiably robust neural
  networks.
\newblock \emph{International Conference on Learning Representations}, 2020.

\end{thebibliography}
\bibliographystyle{icml2020}

\clearpage
\appendix
\twocolumn[\icmltitle{Improving the Tightness of Convex Relaxation Bounds \\
for Training Certifiably Robust Classifiers \\ (Appendix)}]

\section{Proof of Proposition 1}
\begin{repproposition}\label{prop:opt}
Assume $\{z_i', x_i'\}_{i=1}^L$ is obtained by the ReLU network $h_L$ with input $z_1'$, and $\{\delta_{i}^*\}_{i=0}^{L-1}$ is the optimal solution of Fast-Lin or CROWN. If $z'_1=x+\delta_0^*$, and $x'_{ij}\in \mathcal{S}(\delta^*_{ij})$ for all $i=1,...,L-1, j\in \cI_i$, then $\{z_i', x_i'\}_{i=1}^{L}$ is an optimal solution of ~\ref{eq:nonconvex}, Fast-Lin and CROWN. 
Here 
\begin{equation*}
 \mathcal{S}(\delta^*_{ij})=\begin{cases}
 \{\lxij, \uxij\} & \text{if } \delta^*_{ij} = \udelta,\\
 \{0\} & \text{if } \delta^*_{ij} = 0.
 \end{cases}
\end{equation*}
\end{repproposition}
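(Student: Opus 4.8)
The plan is to prove the proposition by establishing something slightly stronger than joint optimality: under the hypothesis the constructed point $\{z_i',x_i'\}_{i=1}^L$ is \emph{identical} to the optimal solution $\{z_i^*,x_i^*\}_{i=1}^L$ of the relaxation. Once that is in hand, the relaxation sandwich already noted in Section~\ref{sec:method} immediately upgrades it to an optimal solution of~\ref{eq:nonconvex}, Fast-Lin, and CROWN simultaneously.

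First I would record a one-neuron identity. For $j\in\cI_i$ the active boundary of the relaxation is encoded by $\delta_{ij}^*\in\{0,\udelta\}$: the optimum lies on the upper line $z=\slope\,x_{ij}+\udelta$ when $\delta_{ij}^*=\udelta$, and on a lower line $z=\underline{a}_{ij}x_{ij}$ through the origin when $\delta_{ij}^*=0$. A short calculation shows each element of $\mathcal{S}(\delta_{ij}^*)$ maps the true ReLU output onto exactly this active line: if $\delta_{ij}^*=\udelta$ then $\max(0,\lxij)=0=\slope\lxij+\udelta$ and $\max(0,\uxij)=\uxij=\slope\uxij+\udelta$; if $\delta_{ij}^*=0$ then $\max(0,0)=0=\underline{a}_{ij}\cdot 0$, which holds for \emph{any} lower slope and therefore covers CROWN as well as Fast-Lin. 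Writing $a_{ij}$ for the slope of the active boundary, the upshot is that $x_{ij}'\in\mathcal{S}(\delta_{ij}^*)$ forces $z_{i+1,j}'=\max(0,x_{ij}')=a_{ij}x_{ij}'+\delta_{ij}^*$, i.e. the genuine ReLU value agrees with the affine value that the relaxed linear network emits on that boundary.

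Next I would run an induction on $i$ establishing $z_i'=z_i^*$ and $x_i'=x_i^*$. The base case is $z_1'=x+\delta_0^*=z_1^*$ by hypothesis, hence $x_1'=W_1z_1'+b_1=x_1^*$. For the inductive step I invoke the ``forward pass through the linear network plus cumulative intercepts'' description of the relaxed optimum (Section~\ref{sec:method} and Appendix~\ref{sec:cr_sols}), under which $z_{i+1,j}^*=a_{ij}x_{ij}^*+\delta_{ij}^*$ at unstable neurons and the exact fixed linear map at stable ones. Given $x_i'=x_i^*$, the one-neuron identity yields $z_{i+1,j}'=a_{ij}x_{ij}'+\delta_{ij}^*=a_{ij}x_{ij}^*+\delta_{ij}^*=z_{i+1,j}^*$ for $j\in\cI_i$, while stable neurons match automatically; applying $W_{i+1}(\cdot)+b_{i+1}$ then gives $x_{i+1}'=x_{i+1}^*$. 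Thus $\{z_i',x_i'\}=\{z_i^*,x_i^*\}$ for all $i$, and in particular $c_t^\top x_L'=p_{\ropt}^*$.

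Finally I would close exactly as the text sketches. The point $\{z_i',x_i'\}$ is feasible for~\ref{eq:nonconvex} by construction, being an honest forward pass of $z_1'=x+\delta_0^*\in\mathcal{B}_{p,\epsilon}(x)$ through $h_L$, so $c_t^\top x_L'\ge p_{\popt}^*$; together with the relaxation inequality $p_{\ropt}^*\le p_{\popt}^*$ and the identity $c_t^\top x_L'=p_{\ropt}^*$ this pins down $p_{\ropt}^*=p_{\popt}^*=c_t^\top x_L'$, so $\{z_i',x_i'\}$ is optimal for~\ref{eq:nonconvex}, and being equal to the relaxed optimum it is optimal for Fast-Lin and CROWN as well. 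I expect the only real obstacle to be justifying the boundary-activation claim $z_{i+1,j}^*=a_{ij}x_{ij}^*+\delta_{ij}^*$ at every unstable neuron, which is where the closed-form/dual optimality structure of Fast-Lin and CROWN from Appendix~\ref{sec:cr_sols} carries the weight; once that description is granted, the per-neuron check and the induction are routine.
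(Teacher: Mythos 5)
Your proposal is correct and follows essentially the same route as the paper's proof: construct the relaxed optimum via the forward recursion $z_{i+1}^*=D_i^{(L)}x_i^*+\delta_i^*$, verify neuron-by-neuron (including the slope-independent $\delta_{ij}^*=0$ case that covers CROWN) that the hypothesis forces $z_{i+1}'=z_{i+1}^*$, induct over layers, and close with the feasibility/lower-bound sandwich. The only cosmetic difference is that you treat stable neurons as matching ``automatically'' where the paper spells out the $\cI_i^+$ and $\cI_i^-$ cases explicitly.
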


\begin{proof}
We only need to prove $\{z_i', x_i'\}_{i=1}^L$ is an optimal solution of both Fast-Lin and CROWN. 
After that, $\{z_i', x_i'\}_{i=1}^L$ is both a lower bound and feasible solution of~\ref{eq:nonconvex}, and therefore is the optimal solution of~\ref{eq:nonconvex}.

Here we define $x_i^*=W_i z^*_i + b_i$ for $i=1,...,L$, $z_{i+1}^*=D_i^{(L)} x_i^*+\delta_{i}^*$ for $i=1,...,L-1$, and $z_1^*=x+\delta_0^*$. 
By definition, $\{x_i^*, z_i^*\}_{i=1}^L$ is an optimal solution of Fast-Lin or CROWN.
Also, since $z^*_1=z'_1$, we have $x'_1=W_1 z^*_1+b_1=x^*_1$. 
Next, we will prove if the assumption holds, we will have $z'_2=z^*_2$ for both Fast-Lin and CROWN.

For $j\in \cI_1^+$, by definition of Fast-Lin and CROWN, $D^{(L)}_{1j}=1$ and $x^*_{1j}\ge \underline{x}_{1j} > 0$, so $x'_{1j}=x^*_{1j}\ge 0$, $z^*_{2,j}=D^{(L)}_{1j}x^*_{1j}=x'_{1j}=\max(x'_{1j},0)=z'_{2j}$.

For $j\in \cI_1^-$, again, by definition, $D^{(L)}_{1j}=0$, and $x^*_{1j}\le \bar{x}_{1j} < 0$, so $x'_{1j}=x^*_{1j}<0$, $z^*_{2j}=D^{(L)}_{1j}x^*_{1j}=0=\max(x'_{1j},0)=z'_{2j}$.

For $j\in \cI_1$:  
\begin{itemize}
    \item If $\delta^*_{1j}=0$ and $x'_{1j}=0$ as assumed in the conditions, since $z^*_1=z'_1$, we know
\begin{equation*}
    x^*_{1j}=W_{1j}z^*_1+b_{1j}=W_{1j}z'_{1} + b_{1j}=x'_{1j}=0,
\end{equation*}
where $W_{1j}$ is the $j$-th row of $W_1$.
No matter what value $D^{(L)}_{1j}$ is, $z^*_{2,j}=D^{(L)}_{1j}x^*_{1j}=0$, $z'_{2j}=\max(x'_{1j},0)=0$, the equality still holds.
    \item If $\delta^*_{1,j}=-\frac{\bar{x}_{1j}\underline{x}_{1j}}{\bar{x}_{1j}-\underline{x}_{1j}}$, for both Fast-Lin and CROWN, $z^*_{2j}=-\frac{\bar{x}_{1j}\underline{x}_{1j}}{\bar{x}_{1j}-\underline{x}_{1j}}(x^*_{1j}-\underline{x}_{1j})$. 
    Further, if $x'_{1j}\in \{\underline{x}_{1j}, \bar{x}_{1j}\}$ as assumed: if $x^*_{1j}=x'_{1j}=\underline{x}_{1j}$, then $x'_{1j}<0$, so $z'_{2j}=\max(x'_{1j},0)=0$, and $z^*_{2j}=\slope(\bar{x}_{1j}-\underline{x}_{1j})=0=z'_{2j}$; 
if $x^*_{1j}=x'_{1j}=\bar{x}_{1j}$, then $x'_{1j}>0$, $z'_{2j}=x'_{1j}$ and $z^*_{2j}=\slope(\bar{x}_{1j}-\underline{x}_{1j})=\uxtwoj=z'_{2j}$.
\end{itemize}

Now we have proved $z'_2=z^*_2$ for both Fast-Lin and CROWN if the assumption is satisfied. Starting from this layer, using the same argument as above, we can prove $z'_3=z^*_3$,...,$z'_{L-1}=z^*_{L-1}$ for both Fast-Lin and CROWN. 
As a result, $x'_L=x^*_L$ and $c_t^Tx'_L=c_t^Tx^*_L=p^*_{\ropt}$, where $\ropt$ can be both Fast-lin and CROWN.
Therefore, $\{z_i', x_i'\}_{i=1}^{L}$ is an optimal solution of $\ropt$.
\end{proof}


\section{Ablation Studies of the Two Regularizers}\label{sec:ablation}
\begin{table*}[h!]
\begin{center}
\scalebox{0.8}{
\begin{tabular}{r|r|r|r|r|r|r|r}
\hline
\multicolumn{1}{c|}{\textbf{$\lambda$}} & \multicolumn{1}{c|}{\textbf{$\gamma$}} & \textbf{Robust Err (\%)} & \textbf{Std. Err (\%)} & \multicolumn{1}{c|}{\textbf{$\lambda$}} & \multicolumn{1}{c|}{\textbf{$\gamma$}} & \multicolumn{1}{l|}{\textbf{Robust Err (\%)}} & \multicolumn{1}{l}{\textbf{Std. Err (\%)}} \\ \hline
1e-5                                     & 0                                      & 52.90                    & 37.61                      & 0                                       & 1e-3                                   & 52.56                                         & 37.45                                           \\ 
5e-5                                     & 0                                      & 53.09                    & 37.77                      & 0                                       & 5e-3                                   & 53.38                                         & 38.19                                           \\
1e-4                                     & 0                                      & 52.48                    & 37.37                      & 0                                       & 1e-2                                   & 52.60                                         & 37.73                                           \\ 
5e-4                                     & 0                                      & 52.85                    & 37.13                      & 0                                       & 2.5e-2                                 & 52.70                                         & 37.78                                           \\ 
1e-3                                     & 0                                      & 52.61                    & 37.96                      & 0                                       & 5e-2                                   & 53.13                                         & 38.36                                           \\ 
2.5e-3                                   & 0                                      & 53.10                    & 38.24                      & 0                                       & 1e-1                                   & 52.72                                         & 38.22                                           \\ 
5e-3                                     & 0                                      & 52.76                    & 38.15                      & 0                                       & 2.5e-1                                 & 52.90                                         & 38.04                                           \\ 
1e-2                                     & 0                                      & 53.14                    & 38.58                      & 0                                       & 5e-1                                   & 52.39                                         & 37.48                                           \\ 
5e-2                                     & 0                                      & 52.82                    & 39.89                      & 0                                       & 1                                      & \textbf{52.27}                                & \textbf{38.07}                                  \\ 
1e-1                                     & 0                                      & 53.94                    & 41.59                      & 0                                       & 2                                      & 53.10                                         & 38.64                                           \\ 
5e-1                                     & 0                                      & 56.39                    & 48.06                      & \multicolumn{1}{l|}{}                   & \multicolumn{1}{l|}{}                  & \multicolumn{1}{l|}{}                         & \multicolumn{1}{l}{}                           \\ 
1                                        & 0                                      & 59.23                    & 52.51                      & \multicolumn{1}{l|}{}                   & \multicolumn{1}{l|}{}                  & \multicolumn{1}{l|}{}                         & \multicolumn{1}{l}{}                           \\ \hline
\end{tabular}
}
\end{center}
\caption{Ablation results on CIFAR10 with the small model, where $\epsilon=2/255$.}\label{tab:ablation}
\end{table*}

In this section, we give the detailed results with either $\lambda$ or $\gamma$ set to 0, i.e., we use only one regularizer in each experiment, in order to compare the effectiveness of the two regularizers. 
All the results are with the small model on CIFAR10 with $\epsilon=2/255$.
The best results are achieved with $\regr$.
We reasoned in~\ref{sec:direct_indicator} that $\regd$ may not perform well when random projection is adopted. 
As shown in the supplementary, the best robust error achieved under the same setting when fixing $\gamma=0$ is higher than when fixing $\lambda=0$, which means $\regr$ is more resistant to the noise introduced by random projection. 
Still, random projections offer a huge efficiency boost when they are used. How to improve the bounds while maintaining efficiency is an important future work.

\section{Additional Results on MNIST}
See Table~\ref{tab:mnist_large}.
\begin{table*}[!t]
\centering
\resizebox{0.75\linewidth}{!}{%
\begin{tabular}{l|l|r|l|l|l|l|l|l|l}
\hline
\multirow{2}{*}{Dataset} & \multirow{2}{*}{$\epsilon(\ell_{\infty})$} & \multirow{2}{*}{Model Family}   & \multirow{2}{*}{Method} & \multicolumn{3}{l|}{Verified Test Error (\%)} & \multicolumn{3}{l}{Standard Test Error (\%)} \\
                         &                                            &                                 &                         & best         & median         & worst         & best         & median         & worst         \\ \hline\hline
\multirow{4}{*}{MNIST}   & \multirow{4}{*}{0.3}                       & \multicolumn{2}{l|}{\cite{gowal2018effectiveness}}               & 8.05         & -              & -             & 1.66         & -              & -             \\ \cline{3-10} 
                         &                                            & \multirow{3}{*}{8 large models} & CI Orig                 & 7.46         & 8.47           & 8.57          & 1.48         & 1.52           & 1.99          \\ \cline{4-10} 
                         &                                            &                                 & CI ReImp                & 7.99         & 8.38           & 8.97          & 1.40         & 1.69           & 2.19          \\ \cline{4-10} 
                         &                                            &                                 & CI Reg                  & 7.26         & 8.44           & 8.88          & 1.51         & 1.72           & 2.21          \\ \hline
\end{tabular}
}
\caption{\small Our results on the MNIST dataset, with CROWN-IBP. Here we use a cheaper training schedule with a total of 100 epochs, all in the same setting as the IBP baseline results of~\cite{zhang2019towards}. CI Orig are results copied from the paper, CI ReImp are results of our implementation of CROWN-IBP, and CI Reg is with regularizer $r$. }
\label{tab:mnist_large}
\end{table*}

\section{Solutions to the Relaxed Problems}\label{sec:cr_sols}
In this section, we give more details about the optimal solutions of Fast-Lin~\citep{weng2018towards} and CROWN~\citep{zhang2018efficient}, to make this paper self-contained.
Recall that for layer-wise convex relaxations, each neuron in the activation layer are independent. 
$\{\laij, \lbij, \uaij, \ubij\}$ are chosen to bound the activations assuming the lower bound $\underline{x}_{ij}$ and upper bound $\bar{x}_{ij}$ of the preactivation $x_{ij}$ is known. 
For $j\in \cI_i$, $\laij \xij + \lbij\le \max(0,\xij)\le \frac{\uxij}{\uxij - \lxij}(\xij-\lxij) \le \uaij\xij+\ubij$; for $j\in \cI_i^+$, $\laij=\uaij=\uaij=\ubij=1$; for $j\in \cI_i^-$, $\laij=\uaij=\lbij=\ubij=0$. 
\paragraph{Optimal Solutions of Fast-Lin}
In Fast-Lin~\citep{weng2018towards}, for $j\in \cI_i$, $\laij=\uaij=\frac{\uxij}{\uxij - \lxij}$, $\lbij=0$, $\ubij=-\frac{\uxij\lxij}{\uxij - \lxij}$, shown as the blue region in the right of Figure~\ref{fig:relaxation}.
To compute the lower and upper bound $\lxij$ and $\uxij$ for $x_{ij}$, we just need to replace the objective of Eq.~\ref{eq:convex_relaxation} with $c_{ij}^\top x_{i}$, where $c_{ij}$ is a one-hot vector with the same number of entries as $x_i$ and the $j$-th entry being 1 for $\lxij$ and -1 for $\uxij$ (an extra negation is applied to the minimum to get $\uxij$).

Such constraints allow each intermediate ReLU activation to reach their upper or lower bounds independently.
As a result, each intermediate {\em unstable neuron} can be seen as an adversary adding a perturbation $\delta_{ij}$ in the range $[0, -\frac{\uxij\lxij}{\uxij - \lxij}]$ to a linear transform, represented as $\zij=\frac{\uxij}{\uxij - \lxij}\xij + \delta_{ij}$. 
Such a point of view gives rise to a more interpretable explanation for Fast-Lin. 
If we construct a network from the relaxed constraints, then the problem becomes how to choose the perturbations for both the input and intermediate unstable neurons to minimize $c_t^\top x_{L}$ of a multi-layer linear network. 
Such a linear network under the perturbations is defined as
\begin{equation}\label{eq:lin_net}
    z_{i+1}=D_{i}x_i + \delta_i, x_i = W_i z_i+b_i,~\text{for } i=1,...,L, z_1=x+\delta_0,
\end{equation}
where $D_{i}$ is a diagonal matrix and $\delta_i$ is a vector. 
The input perturbation satisfies $\lVert \delta_0 \rVert_p \le \epsilon$.
The $j$-th diagonal entry $D_{ij}$ and the $j$-th entry $\delta_{ij}$ for $i>0$ is defined as 
\begin{align}\label{eq:Danddelta}
    D_{ij}&=\begin{cases}
     0, & \text{if}\ j\in \cI_i^- \\
     1, & \text{if}\ j\in \cI_i^+, \\
     \frac{\uxij}{\uxij - \lxij}, & \text{if} j \in \cI_i
    \end{cases}\nonumber\\
    \delta_{ij}&\in \cS_{\delta_{ij}} = 
    \begin{cases}
     \left[0, -\frac{\uxij\lxij}{\uxij - \lxij} \right], & \text{if}\ j\in \cI_i \\
      \{0\}, & \text{otherwise}.
    \end{cases}
\end{align}
With such an observation, we can further unfold the objective in Eq.~\ref{eq:convex_relaxation}
\ref{eq:lin_net} 
into a more interpretable form as
\begin{align}\label{eq:newop}
    c_t^\top x_L &=c_t^\top  f_L(D_{L-1}f_{L-1}(\cdots D_1f_1(x) )) \nonumber\\
    &\quad + c_t^\top \sum_{i=0}^{L-1} W_L\prod_{k=i+1}^{L-1} D_kW_k \delta_i, 
\end{align}
where the first term of RHS is a forward pass of the clean image $x$ through a linear network interleaving between a linear layer $x=W_iz+b_i$ and a scaling layer $z=D_i x$, and the second term is the sum of the $i$-th perturbation passing through all the weight matrices $W_i$ of the linear operation layers and scaling layers $D_i$ after it. 


Therefore, under such a relaxation, only the second term is affected by the variables $\{\delta_i\}_{i=0}^{L-1}$ for optimizing Eq.~\ref{eq:convex_relaxation}. 
Denote the linear network up to the $i$-th layer as $g_i(x)$, and $\mathcal{W}_{i:i'}=W_i\prod_{k=i'}^{i-1}D_kW_k$. 
We can transform Eq.~\ref{eq:convex_relaxation} with $\laij=\uaij=\frac{\uxij}{\uxij - \lxij}$, $\lbij=0$, $\ubij=-\frac{\uxij\lxij}{\uxij - \lxij}$ into the following constrained optimization problem
\begin{align}\label{eq:parallel_relax}
    \underset{\delta_0,...,\delta_{L-1}}{\text{minimize }} &c_t^\top g_L(x) + \sum_{i=0}^{L-1}c_t^\top \cW_{L:i+1}\delta_i,\nonumber\\
    \text{subject to}~&\lVert \delta_0 \rVert_p \le \epsilon, \delta_{ij} \in \cS_{\delta_{ij}}~\text{for }i=1,...,L-1.
\end{align}
Notice $c_t^\top \cW_{L:i+1}$ is just a row vector. 
For $i>0, j\in \cI_i$, to minimize $c_t^\top \cW_{L:i+1}\delta_{ij}$, we let $\delta_{ij}$ be its minimum value $0$ if $(c_t^\top\cW_{L:i+1})_j\ge 0$, or its maximum value $\delta_{ij}= -\frac{\uxij\lxij}{\uxij - \lxij}$ if $(c_t^\top\cW_{L:i+1})_j < 0$, where $(c_t^\top\cW_{L:i+1})_j$ is the $j$-th entry of $c_t^\top\cW_{L:i+1}$. 
For $\delta_0$, when the infinity norm is used, we set $\delta_{ij}=-\epsilon$ if $(c_t^\top\cW_{L:i+1})_j\ge 0$, and otherwise $\delta_{ij}=\epsilon$. 
For other norms, it is also easy to see that
\begin{align}\label{eq:dual_norm}
    \min_{\lVert \delta_0 \rVert_p\le \epsilon} c_t^\top\mathcal{W}_{L:1}\delta_0&=-\epsilon\max_{\lVert \delta_0 \rVert_p\le 1}-c_t^\top\mathcal{W}_{L:1}\delta_0\\
    &=-\epsilon \lVert c_t^\top\mathcal{W}_{L:1} \rVert_*,
\end{align}
where $\lVert\cdot \rVert_*$ is the dual norm of the $p$ norm.
In this way, the optimal value 
$p_\ropt^*$
$p_{\mathcal{C}}$ 
of the relaxed problem (Eq.~\ref{eq:parallel_relax}) can be found efficiently without any gradient step.
The optimal value can be achieved by just treating the input perturbations and intermediate relaxed ReLU activations as adversaries against a linear network after them.
The resulting expression for the lower-bound is
\begin{align}\label{eq:pc}
    p_\popt^* \ge p_\ropt^* &= c_t^\top  g_L(x) - \epsilon \lVert c_t^\top  \cW_{L:1} \rVert_* \nonumber\\
    &\quad- \sum_{i=1}^{L-1}\sum_{j\in I_i}\frac{\uxij\lxij}{\uxij - \lxij}\min((c_t^\top \cW_{L:i+1})_j, 0).
\end{align}

Though starting from different points of view, it can be easily proved that the objective derived from a dual view in~\citep{wong2018scaling} is the same as Fast-Lin. 

\paragraph{Optimal Solution of CROWN}
The only difference between CROWN and Fast-Lin is in the choice of $\underline{a}_{ij}$ for $j\in \mathcal{I}_i$.
For ReLU activations, CROWN chooses $\underline{a}_{ij}=1$ if $\bar{x}_{ij}\ge -\underline{x}_{ij}$, or $\underline{a}_{ij}=0$ otherwise. 
This makes the relaxation tighter than Fast-Lin, but also introduces extra complexity due to the varying $D_i$. 
In Fast-Lin, $D_i$ is a constant once the upper and lower bounds $\bar{x}_i$ and $\underline{x}_i$ are given. 
For CROWN, since $0<\bar{a}_{ij}<1$, $\bar{a}_{ij}\neq \underline{a}_{ij}$, $D_i$ now changes with the optimality condition of $\delta_i$, which depends on the layer $l$ and the index $k$ of the neuron/logit of interest. 
Specifically, for $\ell_\infty$ adversaries, the optimality condition of $\delta_i$ is determined by $c_{lk}^\top \mathcal{W}_{l:i}$, so now we have to apply extra index to the slope as $D_i^{(l,k)}$, as well as the equivalent linear operator as $\cW_{l:1}^{(l,k)}$. 
As a result, the optimal solution is now
\begin{align}
    c_{lk}^\top  g_l(x) &- \epsilon \lVert c_{lk}^\top  \cW_{l:1}^{(l,k)} \rVert_* \nonumber\\
    &\quad- \sum_{i=1}^{L-1}\sum_{j\in I_i}\frac{\uxij\lxij}{\uxij - \lxij}\min((c_{lk}^\top \cW^{(l,k)}_{l:i+1})_j, 0).
\end{align}
This drastically increase the number of computations, especially when computing the intermediate bounds $\underline{x}_i$ and $\bar{x}_i$, where we can no longer just compute a single $\cW_{l:1}$ to get the bound, but have to compute number-of-neuron copies of it for the different values of $D_i^{(l,k)}$ in the intermediate layers.

\paragraph{Practical Implementations of the Bounds}
In practice, the final output bound (also the intermediate bounds) is computed in a backward pass, since we need to determine the value $(c_{lk}^\top \cW^{(l,k)}_{l:i+1})_j$ to choose the optimal $\delta_{ij}^*$, which is the multiplication of all linear operators after layer $i$. 
Computing $c_{lk}^\top \cW^{(l,k)}_{l:i+1}$ in a backward pass avoids repeated computation.
It proceeds as
\begin{equation}
    c_{lk}^\top \cW^{(l,k)}_{l:i} = c_{lk}^\top \cW^{(l,k)}_{l:i+1} D_i^{(l,k)} W_i. 
\end{equation}

\section{A Toy Example for Tight Relaxation}\label{sec:toy}
\begin{figure}[!htbp]
\centering
  \includegraphics[width=.6\linewidth]{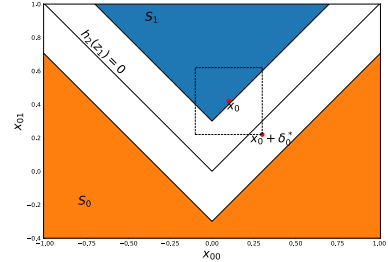}
  \caption{Illustration of the data distribution and the decision boundary of the network. In this case, $b=0.3, \epsilon=0.2, x_0=[0.1, 0.42]^T$.
  }
  \label{fig:toy} 
\end{figure}

We give an illustrative example where the optimal solution to the relaxed problem is a feasible solution to the original non-convex problem for certain input samples even when unstable neurons exist.
It is a binary classification problem for samples $x_0=[x_{01},x_{02}]^T\in \real^2$. 
We assume $x_0$ is uniformly distributed in $S_0\cup S_1$, where $S_0=\{x_0| x_{02}\le |x_{01}|-b, \lVert x_0 \rVert_{\infty}\le 1\}$, $S_1=\{x_0| x_{02}\ge |x_{01}|+b,  \lVert x_0 \rVert_{\infty}\le 1\}$, and $0<b<1$.
The ground-truth label for $x_0\in S_0$ and $x_0\in S_1$ are 0, 1 respectively.
The maximal-margin classifier for such data distribution is $\mathbbm{1}_{\{z_{12}\ge |z_{12}|\}}$, where $z_1$ is the input to the classifier. 
The data distribution and the associated maximal-margin classifier is shown in Figure~\ref{fig:toy}.

This maximal-margin classifier can be represented by a ReLU network with single hidden layer as $\mathbbm{1}_{\{h_2(z_1)\ge 0\}}$, where $h_2(z_1)=W_2\sigma(W_1z_1)$, and
\begin{equation}
    W_1=\begin{bmatrix}
    1  & 0 \\
    -1 & 0 \\
    0  & 1 \\
    0  & -1
    \end{bmatrix},
    W_2=\begin{bmatrix}
    -1 & -1 & 1 & -1
    \end{bmatrix}.
\end{equation}

\begin{claim}[Convex relaxation can be tight when unstable neurons exist]
The solution to the relaxed problem~\ref{eq:parallel_relax} is feasible for the original non-convex problem~\ref{eq:nonconvex} of the aforementioned ReLU network $h_2(x_0+\delta_0)=W_2\sigma(W_1(x_0+\delta_0))$ for any $x_0\in S_1$ under any perturbation $\delta_0\in \{ \delta|\lVert \delta \rVert_{\infty} \le \epsilon, 0<\epsilon<b \}$.
\end{claim}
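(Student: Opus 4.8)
The plan is to reduce the claim to a direct application of Proposition~\ref{prop:opt}, the sufficient condition for tightness. First I would simplify the network. Writing $z_1=[z_{11},z_{12}]^\top$, the four hidden pre-activations are $z_{11},-z_{11},z_{12},-z_{12}$; using $\sigma(a)+\sigma(-a)=|a|$ and $\sigma(a)-\sigma(-a)=a$ one obtains $h_2(z_1)=z_{12}-|z_{11}|$. The adversary thus minimizes $h_2(x_0+\delta_0)=(x_{02}+\delta_{02})-|x_{01}+\delta_{01}|$ over $\|\delta_0\|_\infty\le\epsilon$, and since the output is scalar the relaxation objective of~\ref{eq:parallel_relax} is exactly this quantity, so $c_t^\top$ acts as the identity.

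Next I would classify the four neurons using the pre-activation bounds induced by $z_1\in\mathcal{B}_{\infty,\epsilon}(x_0)$. For $x_0\in S_1$ we have $x_{02}\ge|x_{01}|+b$ and $0<\epsilon<b$, so the third neuron satisfies $\underline{x}_{1,3}=x_{02}-\epsilon>|x_{01}|\ge0$ (always active, $j\in\cI_1^{+}$) and the fourth satisfies $\overline{x}_{1,4}=-x_{02}+\epsilon<0$ (always inactive, $j\in\cI_1^{-}$). Only the first two neurons, carrying $\pm(x_{01}+\delta_{01})$, can be unstable, and precisely when $|x_{01}|<\epsilon$. If $|x_{01}|\ge\epsilon$ there are no unstable neurons and the relaxation is exact (the trivial case), so it remains to treat $|x_{01}|<\epsilon$.

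In that regime I would compute the equivalent linear map $\cW_{2:1}=W_2 D_1 W_1$ with $D_1=\mathrm{diag}(D_{1,1},D_{1,2},1,0)$ and relaxation slopes $D_{1,1}=(x_{01}+\epsilon)/2\epsilon$, $D_{1,2}=(\epsilon-x_{01})/2\epsilon$, giving $c_t^\top\cW_{2:1}=[-x_{01}/\epsilon,\ 1]$. The $\ell_\infty$-optimal input perturbation is then $\delta_{0,1}^*=\epsilon\,\mathrm{sign}(x_{01})$ and $\delta_{0,2}^*=-\epsilon$, while the optimal intercepts on the two unstable neurons both take the nonzero value $\delta^*_{1,j}\neq0$ (the first branch of $\mathcal{S}$ in Proposition~\ref{prop:opt}), because the relevant entries of $c_t^\top\cW_{2:2}=W_2$ equal $-1<0$. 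Passing $z_1^*=x_0+\delta_0^*$ through the true ReLU network yields $x'_{1,1}=x_{01}+\epsilon\,\mathrm{sign}(x_{01})\in\{x_{01}-\epsilon,\,x_{01}+\epsilon\}=\{\underline{x}_{1,1},\overline{x}_{1,1}\}$ and, because $x'_{1,2}=-x'_{1,1}$, the analogous membership for neuron $2$. Hence $x'_{1,j}\in\mathcal{S}(\delta^*_{1,j})$ at every unstable neuron, and Proposition~\ref{prop:opt} gives that the ReLU forward pass $\{z_i',x_i'\}$ coincides with the relaxed optimum and is optimal for both Fast-Lin and~\ref{eq:nonconvex}; in particular the relaxed solution is feasible for~\ref{eq:nonconvex}.

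The main subtlety I expect to address is the coupling between the two unstable neurons together with the degenerate tie-break at $x_{01}=0$. Since the second pre-activation is exactly the negation of the first, the two required memberships reduce to the single requirement that $\delta_{0,1}^*$ sit at a box corner $\pm\epsilon$, which is automatic for the $\ell_\infty$ optimum whenever the coefficient $-x_{01}/\epsilon$ is nonzero. When $x_{01}=0$ that coefficient vanishes, so $\delta_{0,1}$ is unconstrained by the objective and the relaxed optimum is non-unique; there I would simply select the optimal solution with $\delta_{0,1}^*=\pm\epsilon$, which again drives both pre-activations onto their bounds. With this tie-breaking the hypotheses of Proposition~\ref{prop:opt} hold for every $x_0\in S_1$, completing the argument.
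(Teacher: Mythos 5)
Your proof is correct, and the core computations coincide with the paper's: the same case split on whether $|x_{01}|$ exceeds $\epsilon$, the same identification of the first two neurons as the only possibly unstable ones, the same equivalent linear map $c_t^\top\mathcal{W}_{2:1}=[-x_{01}/\epsilon,\,1]$, and the same optimal perturbations $\delta_0^*$ and intercepts $\delta_1^*$. Where you diverge is in how you close the argument: the paper finishes by explicitly evaluating both $z_2^*=D_1W_1(x+\delta_0^*)+\delta_1^*$ and the ReLU forward pass $z_2'$ and observing they are equal (which also lets it exhibit the concrete values of the regularizers $d$ and $r$ on this example), whereas you invoke Proposition~\ref{prop:opt} and only verify its hypothesis $x'_{1j}\in\mathcal{S}(\delta^*_{1j})$ at the unstable neurons. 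Your route is slightly more economical and makes the example read as a sanity check of the proposition rather than an independent calculation; the paper's route is self-contained and doubles as an illustration of the regularizers vanishing. Two further points in your favor: you dispense with the symmetry reduction by carrying $\mathrm{sign}(x_{01})$ through directly, and you explicitly resolve the degenerate tie-break at $x_{01}=0$, a point the paper's case (2) silently excludes by restricting to $0<a$; since the relaxed optimum is non-unique there, your choice of an optimal $\delta_{0,1}^*=\pm\epsilon$ is the right way to make the claim hold for every $x_0\in S_1$.
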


\begin{proof}
Both the network and $S_1$ are symmetric in $x_{01}$, therefore it is sufficient to prove the result for $x_{01}\ge 0$.

\textbf{(1)} If $x_{01}\ge \epsilon$, since $x_{01}\in S_1$, $x_{02}\ge b+\epsilon$. 
The 4 neurons in $x_1=W_1 (x_0+\delta_0)$ are either non-negative or non-positive for any $\delta_0\in \{ \delta|\lVert \delta \rVert_{\infty} \le \epsilon, 0<\epsilon<b \}$ and the convex relaxation is tight.

\textbf{(2)} If $x_{01} < \epsilon$, for any input sample $x_0\in\{x|x=[a,c]^T, 0<a<b < b+a\le c\}\subseteq S_1$, the optimal perturbation $\delta_{\mathcal{O}}^* \in \{\delta|\lVert \delta \rVert_\infty \le \epsilon, 0 < a < \epsilon < b\}$ can be inferred from Figure~\ref{fig:toy} as $[\epsilon, -\epsilon]^T$.
The corresponding ReLU activations and the optimal solution are 
\begin{equation}\label{eq:z2_orig_toy}
z'_2=[a+\epsilon, 0, c-\epsilon, 0]^T, x'_2=c-a-2\epsilon.
\end{equation}

Meanwhile, for the relaxed problem, the lower and upper bounds of the hidden neurons $x_1=W_1z_1$ are 
\begin{align*}
    \underline{x}_1=[a-\epsilon, -a-\epsilon, c-\epsilon, -c-\epsilon]^T,\\
    \bar{x}_1=[a+\epsilon, -a+\epsilon, c+\epsilon, -c+\epsilon]^T.    
\end{align*}

Therefore, the first 2 hidden neurons are unstable neurons, and the convex relaxation we are using will relax the ReLU operation $z_2=\sigma(x_1)$ into $z_2=D_1 x_1+\delta_1$, where $D_1$ is a diagonal matrix, and $\delta_1$ are slack variables bounded by $0\le \delta_1 \le \bar{\delta}_1$. 
The diagonal entries of $D_1$ and the upper bounds $\bar{\delta}_1$ are defined by Eq.~\ref{eq:Danddelta} as
\begin{align*}
    \text{diag}(D_1) &= \left[\frac{a+\epsilon}{2\epsilon}, \frac{-a+\epsilon}{2\epsilon}, 1, 0\right], \\
    \bar{\delta}_1 &= \left[\frac{(-a+\epsilon)(a+\epsilon)}{2\epsilon}, \frac{(-a+\epsilon)(a+\epsilon)}{2\epsilon}, 0 , 0  \right]^T,
\end{align*}
i.e., $\delta_{13}$ and $\delta_{14}$ are always 0. 
The relaxed linear network, as defined by the constraints in Eq.~\ref{eq:convex_relaxation} with our specific relaxation, is now determined as $x_2=h_2(z_1)=W_2(D_1 W_1 (x+\delta_0)+\delta_1)$.
It can be written into the same form as Eq.~\ref{eq:newop} as 
\begin{align}\label{eq:app_obj}
    c_t x_2&=c_t h_2(z_1)=c_t W_2D_1 W_1 (x+\delta_0) + c_tW_2\delta_1\\
    &=c_t \mathcal{W}_{2:1} x + c_t \mathcal{W}_{2:1}\delta_0 + c_t W_2\delta_1,
\end{align}
where 
\begin{equation*}
    c_t=1, \mathcal{W}_{2:1}=[-\frac{a}{\epsilon},1].
\end{equation*} 
Therefore, to minimize the term $c_t \mathcal{W}_{2:1}\delta_0$, we should choose $\delta^*_{0}=[\epsilon, -\epsilon]^T$, which is equal to $\delta^*_{\mathcal{O}}$. 
To minimize $c_t W_2\delta_1$, we should let $\delta_1^*=\left[\frac{(-a+\epsilon)(a+\epsilon)}{2\epsilon}, \frac{(-a+\epsilon)(a+\epsilon)}{2\epsilon}, 0, 0\right]$, 
which gives rise to the optimal solution of the relaxed problem as
\begin{equation*}
    z_2^* = D_1W_1(x+\delta_0^*)+\delta_1^*=[a+\epsilon, 0, c-\epsilon, 0]^T, x_2^* = c-a-2\epsilon,
\end{equation*}
the same as the optimal solution of the original non-convex problem given in Eq.~\ref{eq:z2_orig_toy}.
This shows both of the regularizers, in this case instantiated as 
\begin{align*}
d(x,\delta_0^*, W_1,W_2)&=c_t(x_2'-x_2^*), r(x,\delta_0^*, W_1,W_2)\\
&= \frac{1}{2}\left(|x_{11}'-x_{11}^*|+|x_{12}'-x_{12}^*|\right), 
\end{align*}
are able to reach 0 for certain networks and samples when non-stable neurons exist.
\end{proof}

It might seem that adding $d(x,\delta_0^*, W,b)=p'_{\popt}(x,\delta_0^*)-p_{\ropt}^*$ as a regularizer into the loss function will undesirably minimize the margin $p'_{\popt}(x,\delta_0^*)$ for the ReLU network.
Theoretically, however, it is not the case, since $\min p'_{\popt}-p_{\ropt}^*$ is a different optimization problem from neither $\min p'_{\popt}$ nor $\max p_{\ropt}^*$.
In fact, the non-negative $d(x,\delta_0^*, W,b)$ could be minimized to 0 with both $p'_{\popt}$ and $p_{\ropt}^*$ taking large values.
In the illustrative example, it is easy to see that for any $x_0\in \{x|x=[a,c]^T, 0<a<b < b+a\le c\}$, $d(x,\delta_0^*, W,b)=c_t(x_2'-x_2^*)=0$, but $p'_{\popt}=p_{\ropt}^*=c-a-2\epsilon>0$ when $\epsilon < \frac{b}{2}$.

Moreover, since we are maximizing $p_{\ropt}^*$ via the robust cross entropy loss\footnote{The cross entropy loss on top of the lower bounds of margins to all non-ground-truth classes, see Eq.~\ref{eq:app_obj}.} while minimizing the non-negative difference $p'_{\popt}-p_{\ropt}^*$, the overall objective tends to converge to a state where both $p'_{\popt}$ and $p_{\ropt}^*$ are large. 

\section{Difficulties in Adapting IBP for $\ell_2$ Adversary}\label{sec:ibp_l2}
The Inverval Bound Propagation (IBP) method discussed here is defined in the same way as~\citep{gowal2018effectiveness}, where the bound of the margins are computed layer-wise from the input layer to the final layer, and the bound of each neuron is considered independently for both bounding that neuron and using its inverval to bound other neurons. 

It is natural to apply IBP against $\ell_\infty$ adversaries, since each neurons are allowed to change independently in its interval, which is similar to the $\ell_\infty$ ball.
One way to generalize IBP to other $\ell_p$ norms is to modify the bound propagation in the first layer, such that any of its output neuron ($i$) is bounded by an interval centered at $x_{1i}=W_{1,i}x+b_{1,i}$ with a radius of $\epsilon_p \lVert W_{1,i} \rVert_{p^*}$, where $x_{1i}$ the clean image $x$'s response, and $W_{1,i}$ is the first layer's linear transform corresponding to the neuron, and $\lVert \cdot \rVert_{p^*}$ is the dual norm of $\lVert \cdot \rVert_{p}$, with $\frac{1}{p}+\frac{1}{p^*}=1$.
We refer to this approach IBP($\ell_p$, $\epsilon_p$).
Here by the example of $\ell_2$ norm, we show such an adaptation may not be able to obtain a robust \textit{convolutional} neural network compared with established results, such as reaching 61\% certified accuracy on CIFAR10 with $\epsilon_2=0.25$~\citep{cohen2019certified}.

Specifically, for adversaries within the $\ell_2$-ball $\mathcal{B}_{2,\epsilon_2}(x)$, IBP($\ell_2$, $\epsilon_2$) computes the upper and lower bounds as
\begin{align}\label{eq:ibp_l2}
    \bar{x}_{1i}^2 = W_{1,i} x + \epsilon_2 \lVert W_{1,i}\rVert_2 + b_{1,i},\nonumber\\
    \underline{x}_{1i}^2 = W_{1,i} x - \epsilon_2 \lVert W_{1,i}\rVert_2 + b_{1,i}.
\end{align}
By comparison, for some adversary within the $\ell_\infty$-ball $\mathcal{B}_{\infty,\epsilon_\infty}(x)$, IBP($\ell_\infty$, $\epsilon_\infty$) computes the upper and lower bounds as 
\begin{align}\label{eq:ibp_linf}
    \bar{x}_{1i}^{\infty} = W_{1,i} x + \epsilon_\infty \lVert W_{1,i}\rVert_1 + b_{1,i},\nonumber\\
    \underline{x}_{1i}^{\infty} = W_{1,i} x - \epsilon_\infty \lVert W_{1,i}\rVert_1 + b_{1,i}.
\end{align}
Since the two approaches are identical in the following layers, to analyze the best-case results of IBP($\ell_2$, $\epsilon_2$) based on established results of IBP($\ell_\infty$, $\epsilon_\infty$), it suffices to compare the results of IBP($\ell_\infty$, $\epsilon_\infty$) with $\epsilon_\infty$ set to some value such that the range $\bar{x}_{1}^{\infty}-\underline{x}_{1}^{\infty}$ of Eq.~\ref{eq:ibp_linf} is majorized by the range $ \bar{x}_{1}^2  - \underline{x}_{1}^2$ of Eq.~\ref{eq:ibp_l2}. 
In this way, we are assuming a weaker adversary for IBP($\ell_\infty$, $\epsilon_\infty$) than the original IBP($\ell_2$, $\epsilon_2$), so its certified accuracy is an upper bound of IBP($\ell_2$, $\epsilon_2$).
Therefore, it suffices to let
\begin{equation}\label{eq:range_cond}
    \epsilon_\infty \lVert W_{1,i}\rVert_1 \le \epsilon_2 \lVert W_{1,i}\rVert_2,~ \forall i=1,2,...,n_1.
\end{equation}
For any $W_{1,i}\in \real^d$, we have $\lVert W_{1,i} \rVert_1 \le \sqrt{d} \lVert  W_{1,i} \rVert_2 $. 
To make Eq.~\ref{eq:range_cond} hold for any $W_{1,i}\in \real^d$, we can set
\begin{equation*}
    \epsilon_\infty = \frac{1}{\sqrt{d}}\epsilon_2.
\end{equation*}
In general, $d$ is equal to the data dimension, such as 3072 for the CIFAR10 dataset. 
However, for convolutional neural networks, the first layer is usually convolutional layers and $W_{1,i}$ is a 3072-dimensional sparse vector with at most $k\times k\times 3$ non-zero entries at fixed positions for convolution kernels with size $k$ and input images with 3 channels.
In~\citep{zhang2019towards, gowal2018effectiveness, wong2018scaling}, $k=3$ for their major results.
In this case, 
\begin{equation}
\epsilon_\infty \ge \frac{1}{3\sqrt{3}}\epsilon_2. 
\end{equation}

Under such assumptions, for $\epsilon_2=0.25$, the certified accuracy of IBP($\ell_2$, 0.25) on CIFAR10 should be upper bounded by IBP($\ell_\infty$, 0.04811), unless changing the first layer bounds into $\ell_\infty$ norm based bounds significantly harms the performance.\footnote{Which is unlikely, since $\epsilon_\infty$ is now a weaker adversary than $\epsilon_2$ and the difference in gradient expression only appears in the first layer.} 
The best available results of certified accuracies are 33.06\% for IBP($\ell_\infty$, 0.03137) and 23.20\% for IBP($\ell_\infty$, 0.06275)~\citep{zhang2019towards}. 
Comparing with the established results from~\citep{cohen2019certified} (61\%), we can conclude the certified accuracy of IBP($\ell_2$, 0.25) is at least 27.93\% to 37.80\% lower than the best available results, since we are assuming a weaker adversary. 

IBP($\ell_2$, $\epsilon_2$) is also not as good as the results with convex relaxation from~\citep{wong2018scaling}, where the best single-model (with projection as approximation) certified accuracy with $\epsilon_2=36/255$ is 51.09\%.
For IBP, this adversary is no weaker than $\epsilon_\infty=6.9282/255$. 
The best available results for IBP($\ell_\infty$, 2/255) and IBP($\ell_\infty$, 8/255) are 50.02\%~\citep{gowal2018effectiveness} and 33.06\%~\citep{zhang2019towards} respectively, which indicates the certified accuracy of IBP($\ell_2$, 36/255) is at least 1.07\% to 18.03\% worse (much loser to 18.03\%) than the approximated version of convex relaxation under the same $\ell_2$ adversary.

\end{document}